
\documentclass[10pt,twocolumn,letterpaper]{article}

\usepackage{cvpr}              

\usepackage{graphicx}
\usepackage{amsmath}
\usepackage{amssymb}
\usepackage{booktabs}

\usepackage{gensymb}
%
\usepackage[pagebackref,breaklinks,colorlinks]{hyperref}

\usepackage[capitalize]{cleveref}
\crefname{section}{Sec.}{Secs.}
\Crefname{section}{Section}{Sections}
\Crefname{table}{Table}{Tables}
\crefname{table}{Tab.}{Tabs.}

\usepackage[utf8]{inputenc} 
\usepackage[T1]{fontenc}    
\usepackage{hyperref}       

\usepackage{booktabs}       
\usepackage{amsfonts}       
\usepackage{nicefrac}       
\usepackage{microtype}      
\usepackage{xcolor}         
\usepackage{amsmath,mathtools}
\usepackage{amssymb}
\usepackage{amsthm}
\usepackage{graphicx}
\usepackage{enumitem}
\usepackage{bm}
\usepackage{bbm}
\usepackage{enumitem}
\usepackage{tabularx}
\usepackage{multirow}

\usepackage{tabularx}
\newcolumntype{L}{>{\raggedright\arraybackslash}X}
\newcolumntype{C}{>{\centering\arraybackslash}X}

\definecolor{mkcolor}{RGB}{255,0,128}

\definecolor{ricardocolor}{RGB}{255, 87, 51}

\definecolor{leocolor}{RGB}{0,0,255}

\definecolor{kecolor}{RGB}{0,128,0}


\begin{document}

\title{\vspace{-0.4cm}SCADE: NeRFs from Space Carving with Ambiguity-Aware Depth Estimates\vspace{-0.4cm}}

\author{Mikaela Angelina Uy$^{1,2}$~~~Ricardo Martin-Brualla$^{2}$~~~Leonidas Guibas$^{1,2}$~~~Ke Li$^{2,3}$
\vspace{0.2cm}\\
$^1$Stanford University~~~~~$^2$Google~~~~~$^3$Simon Fraser University
\vspace{0.3cm}\\
}

\twocolumn[{
\renewcommand\twocolumn[1][]{#1}%
\maketitle
\vspace{-0.485in}
\begin{center}
\vspace{-5pt}
    \centering
    \includegraphics[width=0.93\textwidth]{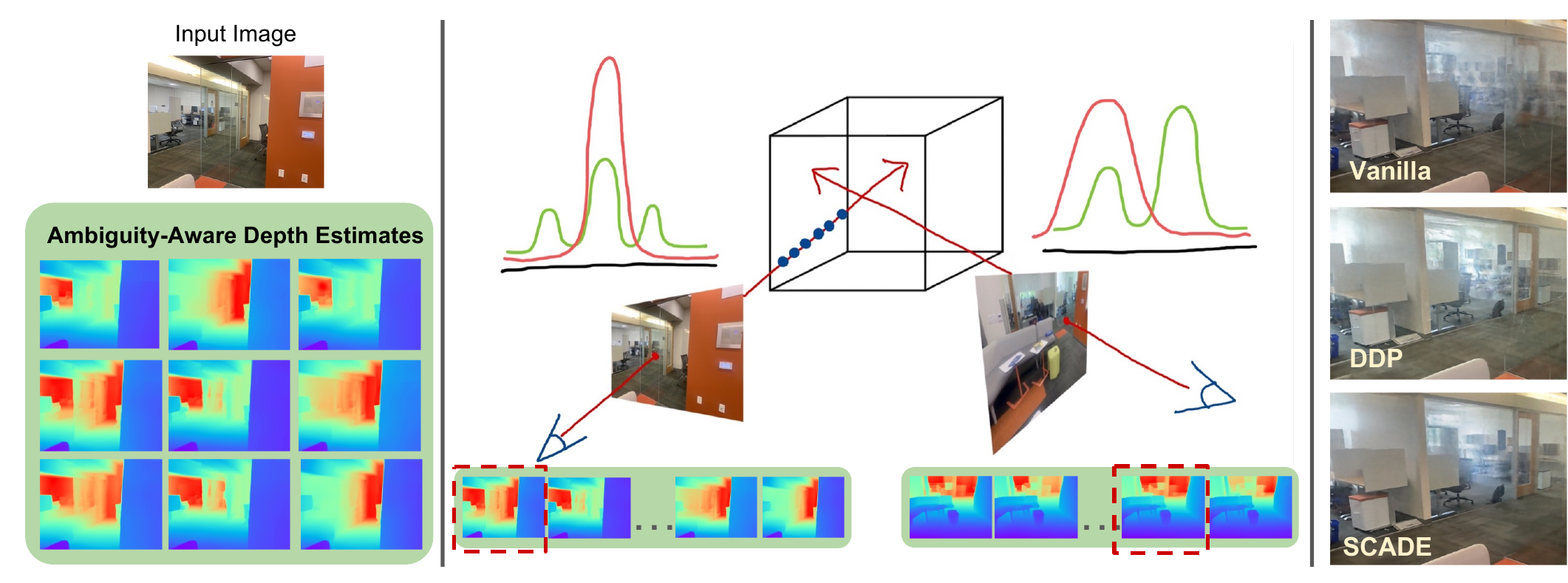}
    \vspace{-0.3cm}
    \captionof{figure}{\textbf{SCADE Overview}. We present SCADE, a novel technique for NeRF reconstruction under \emph{sparse, unconstrained views} for in-the-wild indoor scenes. We leverage on generalizable monocular depth priors and address to represent the inherent ambiguities of monocular depth by exploiting our \textbf{ambiguity-aware depth estimates} (left). Our approach accounts for \emph{multimodality} of both distributions using our novel \emph{space carving loss} that seeks to \emph{disambiguate} and find the common mode to \emph{fuse} the information between different views (middle). SCADE enables better photometric reconstruction especially in highly ambiguous scenes, \eg non-opaque surfaces (right). \vspace{-0.1cm}
    }
    \label{fig:teaser}
\end{center}%
}]
\maketitle

\begin{abstract}
\vspace{-0.5cm}
Neural radiance fields (NeRFs) have enabled high fidelity 3D reconstruction from multiple 2D input views. However, a well-known drawback of NeRFs is the less-than-ideal performance under a small number of views, due to insufficient constraints enforced by volumetric rendering. To address this issue, we introduce SCADE, a novel technique that improves NeRF reconstruction quality on sparse, unconstrained input views for in-the-wild indoor scenes. To constrain NeRF reconstruction, we leverage geometric priors in the form of per-view depth estimates produced with state-of-the-art monocular depth estimation models, which can generalize across scenes. A key challenge is that monocular depth estimation is an ill-posed problem, with inherent ambiguities. To handle this issue, we propose a new method that learns to predict, for each view, a continuous, multimodal distribution of depth estimates using conditional \emph{Implicit Maximum Likelihood Estimation} (cIMLE). In order to disambiguate exploiting multiple views, we introduce an original space carving loss that guides the NeRF representation to fuse multiple hypothesized depth maps from each view and distill from them a common geometry that is consistent with all views. Experiments show that our approach enables higher fidelity novel view synthesis from sparse views. Our project page can be found at \href{https://scade-spacecarving-nerfs.github.io}{scade-spacecarving-nerfs.github.io}.

\end{abstract}
\section{Introduction}
\vspace{-0.2cm}
Neural radiance fields (NeRF)~\cite{mildenhall2020nerf} have enabled high fidelity novel view synthesis from dozens of input views. Such number of views are difficult to obtain in practice, however. When given only a small number of sparse views, vanilla NeRF tends to struggle with reconstructing shape accurately, due to inadequate constraints enforced by the volume rendering loss alone.

Shape priors can help remedy this problem. Various forms of shape priors have been proposed for NeRFs, such as object category-level priors~\cite{jang2021codenerf}, and handcrafted data-independent priors~\cite{Niemeyer2021Regnerf}. The former requires knowledge of category labels and is not applicable to scenes, and the latter is agnostic to the specifics of the scene and only encodes low-level regularity (e.g., local smoothness). A form of prior that is both scene-dependent and category-agnostic is per-view monocular depth estimates, which have been explored in prior work~\cite{deng2022depth,roessle2022dense}. Unfortunately, monocular depth estimates are often inaccurate, due to estimation errors and inherent ambiguities, such as albedo vs. shading (cf. check shadow illusion), concavity vs. convexity (cf. hollow face illusion), distance vs. scale (cf. miniature cinematography), etc. As a result, the incorrect or inconsistent priors imposed by such depth estimates may mislead the NeRF into reconstructing incorrect shape and produce artifacts in the generated views. 

In this paper, we propose a method that embraces the uncertainty and ambiguities present in monocular depth estimates, by modelling a probability distribution over depth estimates. The ambiguities are retained at the stage of monocular depth estimation, and are only resolved once information from multiple views are fused together. We do so with a principled loss defined on probability distributions over depth estimates for different views. This loss selects the subset of modes of probability distributions that are consistent across all views and matches them with the modes of the depth distribution along different rays as modelled by the NeRF. It turns out that this operation can be interpreted as a probabilistic analogue of classical depth-based space carving~\cite{space_carving}. For this reason, we dub our method \emph{Space Carving with Ambiguity-aware Depth Estimates}, or SCADE for short.

Compared to prior approaches of depth supervision~\cite{deng2022depth,roessle2022dense} that only supervise the moments of NeRF's depth distribution (rather than the modes), our key innovation is that we supervise the modes of NeRF's depth distribution. The supervisory signal provided by the former is weaker than the latter, because the former only constrains the value of an integral aggregated along the ray, whereas the latter constrains the values at different individual points along the ray. Hence, the supervisory signal provided by the former is 2D (because it integrates over a ray), whereas the supervisory signal provided by the our method is 3D (because it is point-wise) and thus can be more fine-grained.

An important technical challenge is modelling probability distributions over depth estimates. Classical approaches use simple distributions with closed-form probability densities such as Gaussian or Laplace distributions. Unfortunately these distributions are not very expressive, since they only have a single mode (known as ``unimodal'') and have a fixed shape for the tails. Since each interpretation of an ambiguous image should be a distinct mode, these simple unimodal distributions cannot capture the complex ambiguities in depth estimates. Na\"{i}ve extensions like a mixture of Gaussians are also not ideal because some images are more ambiguous than others, and so the number of modes needed may differ for the depth estimates of different images. Moreover, learning such a mixture requires backpropagating through E-M, which is nontrivial. Any attempt at modifying the probability density to make it capable of handling a variable number of modes can easily run into an intractable partition function~\cite{hinton2012practical,cho2013gaussian,ranzato2007unified}, which makes learning difficult because maximum likelihood requires the ability to evaluate the probability density, which is a function of the partition function. 

\begin{figure}[t]
    \centering
    \includegraphics[width=0.9\linewidth]{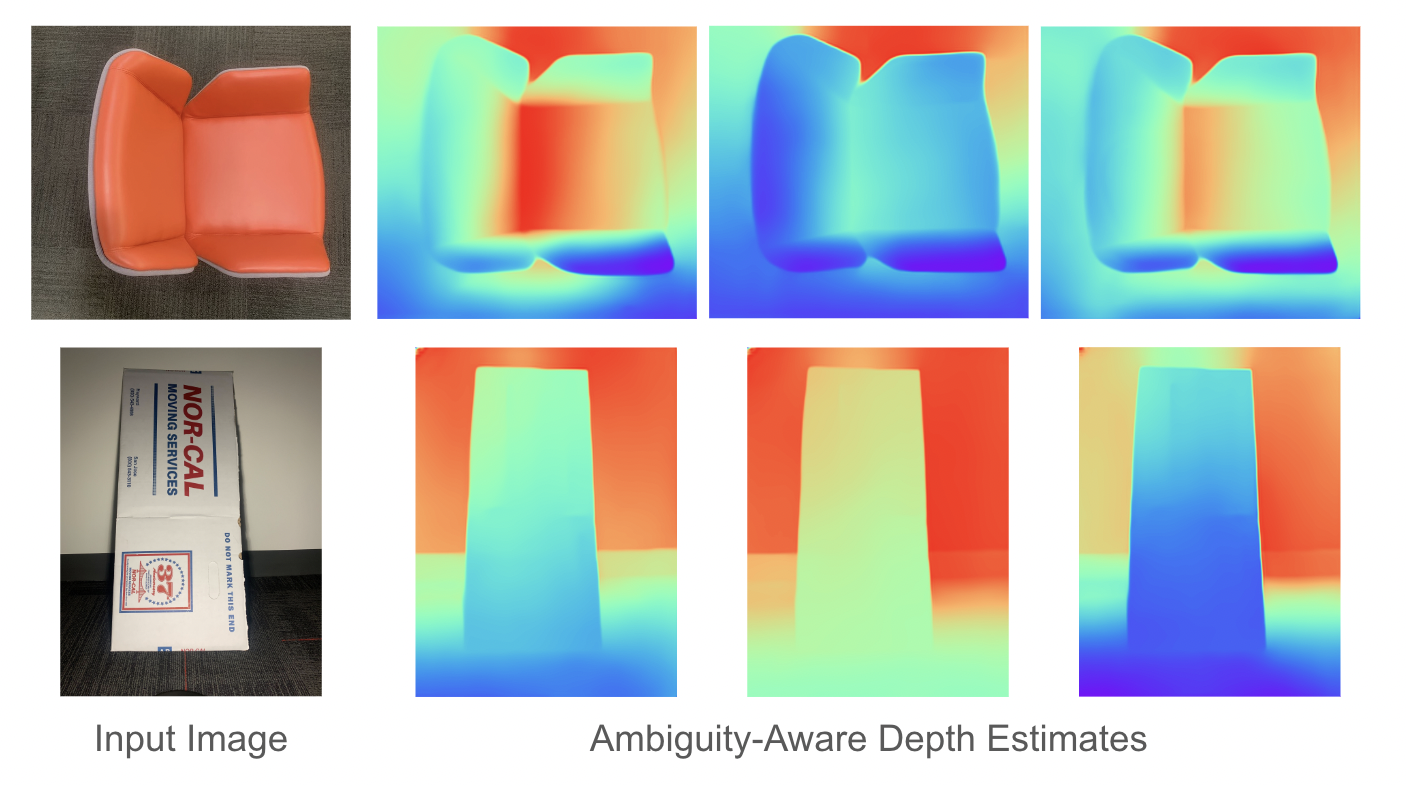}
    \vspace{-1.3em}
    \caption{{\textbf{Ambiguities from a single image}. We show samples from our ambiguity-aware depth estimates that is able to handle different types of ambiguities. (Top) An image of a chair taken from the top-view. Without context of the scene, it is unclear that it is an image of a chair. We show different samples from our ambiguity-aware depth estimates that are able to capture different degrees of convextiy. (Bottom) An image of a cardboard under bad lighting conditions that capture the albedo vs shading ambiguity that is also represented by our different samples.}}
    \label{fig:ambiguities}
    \vspace{-1.3em}
\end{figure}

To get around this conundrum, we propose representing the probability distribution with a set of samples generated from a neural network. Such a distribution can be learned with a conditional GAN; however, because GANs suffer from mode collapse, they cannot model multiple modes~\cite{isola2017image} and are therefore unsuited to modelling ambiguity. Instead, we propose leveraging conditional Implicit Maximum Likelihood Estimation (cIMLE)~\cite{Li2020MultimodalIS,chimle} to learn the distribution, which is designed to avoid mode collapse. 

We consider the challenging setting of leveraging out-of-domain depth priors to train NeRFs on real-world indoor scenes with sparse views. Under this setting, the depth priors we use are trained on a different dataset (e.g., Taskonomy) from the scenes our NeRFs are trained on (e.g., ScanNet). This setting is more challenging than usual due to domain gap between the dataset the prior is trained on and the scenes NeRF is asked to reconstruct. We demonstrate that our method outperforms vanilla NeRF and NeRFs with supervision from depth-based priors in novel view synthesis. 

In summary, our key contributions include:
\begin{itemize}
    \item An method that allows the creation of NeRFs in unconstrained indoor settings from only a modest number of 2D views by introducing a sophisticated way to exploit ambiguity-aware monocular depth estimation. 
    \item A novel way to sample distributions over image depth estimates based on conditional Implicit Maximum Likelihood Estimation that can represent depth ambiguities and capture a variable number of discrete depth modes.
    \item A new space-carving loss that can be used in the NeRF formulation to optimize for a mode-seeking 3D density that helps select consistent depth modes across the views and thus compensate for the under-constrained photometric information in the few view regime.
\end{itemize}

\vspace{-0.2cm}
\section{Related Work}
\vspace{-0.1cm}
\paragraph{Novel View Synthesis and 3D Reconstruction.}Reconstructing the structure of a scene from a few views is a long standing problem in computer vision with a long literature. Neural Radiance Fields or NeRF~\cite{mildenhall2020nerf} revolutionized the field by showing how to use weights of an MLP to represent a scene that is rendered using volume rendering~\cite{max1995optical}. A key innovation was the use of positional encoding~\cite{tancik2020fourier, vaswani2017attention} to increase the effective capacity of the MLPs that model a emitted radiance and density as a function of position and viewing direction. Extensions of NeRF include works on unconstrained photo collections~\cite{martinbrualla2020nerfw}, dynamic scenes~\cite{li2021neural}, deformable scenes~\cite{park2021nerfies}, and reflective materials~\cite{verbin2022refnerf, bi2020neural}.

A critical shortcoming of NeRF is its reliance on having many input views of a scene. Several approaches have been proposed, including adding patch likelihood losses~\cite{Niemeyer2021Regnerf}, data-driven priors~\cite{yu2021pixelnerf, tancik2020meta}, semantic consistency prior~\cite{Jain_2021_ICCV}, image features~\cite{wang2021ibrnet}, or surface~\cite{Niemeyer2021Regnerf, verbin2022refnerf}, occupancy~\cite{Oechsle2021ICCV}, and depth~\cite{wei2021nerfingmvs, kendall2017uncertainties, roessle2022dense} priors. DS-NeRF~\cite{deng2022depth} uses the sparse point reconstructions recovered during Structure-from-Motion, to supervise the depth of sparse points in the recovered NeRF. In the same spirit, DDP~\cite{roessle2022dense} uses a depth completion network, that takes as input sparse point cloud projected to one an input view, and produces a depth estimate. Both works~\cite{kendall2017uncertainties,roessle2022dense } model depth with uncertainty but only supervise with \emph{moments} of NeRF's depth distribution. 
In contrast, our work is able to represent \emph{multimodal} depth estimates, which can handle the inherit ambiguities of depth estimation, and is able to seek the modes for each depth estimate to make a consistent prediction of the scene structure.

Inspired by traditional multi-view stereo, MVS-NeRF~\cite{mvsnerf} and RC-MVSNet~\cite{chang2022rcmvsnet} incorporate the use cost volumes~\cite{gu2019cas} to NeRF. To construct the cost volumes, these works look for agreement between features to look for correspondences, which is difficult under the setting of large variations in appearance or viewpoint. In contrast, our approach introduces a novel space carving loss that does not rely on feature correspondences and instead directly defines the loss in 3D.

Another line of works focus on geometry reconstruction~\cite{wang2021neus, yariv2021volume, yu2022monosdf} by using the volumetric rendering scheme to learn a neural SDF representation. These works tackle a different problem on modeling geometry reconstruction, unlike NeRFs whose focus is modeling appearance for novel view synthesis, where our work falls under.
\vspace{-0.5cm}
\paragraph{Depth Estimation from Single View.}Depth estimation from a single image is a complex task due multiple ambiguities, including scene scale and shading / albedo ambiguities. Early efforts used MRFs to compute predictions on superpixels~\cite{saxena2008make3d}. The advent of consumer depth cameras like Kinect~\cite{zhang2012microsoft} enabled the acquisition of larger scale indoor 3D datasets~\cite{silberman2012indoor}, leading to methods that used deep neural networks to predict depth from a single color image~\cite{eigen2014depth}. Nonetheless, scaling datasets for depth estimation remained a challenge. Deep3D~\cite{xie2016deep3d} enabled the creation of stereo views by training on a large dataset of stereo movies, while~\cite{godard2017unsupervised} used the self-supervision of left-right consistency to learn depth from stereo views from driving cars. This approach was extended to videos where the egomotion is also estimated and self-supervision happens across time~\cite{zhou2017unsupervised}. MegaDepth~\cite{li2018megadepth} uses the depth from SfM reconstructions of internet photos, together with semantic segmentation that conveys ordinal depth supervision cues, i.e. transient objects must be in front of the static scenes. Most recent, multi-task learning has shown promise in training depth estimators that work well in out-of-domain data~\cite{ranftl2020towards}.

In some cases, surface normal estimation has fewer pitfalls than direct depth estimation, as it suffers somewhat less from scale ambiguities. GeoNet~\cite{qi2018geonet} jointly estimates depth and surface normals, then uses the geometric relation between them to refine the estimates. 
LeRes~\cite{yin2022towards} uses a second geometry reasoning module that refines a focal length estimate and the depth itself. Our depth estimation approach is derived from LeReS without the second stage, as the focal length is already estimated during SfM for NeRF scenes.

Most relevant is the work of Kendall and Gal~\cite{kendall2017uncertainties}, that learns depth estimation in a Bayesian setting, where their objective maximizes the likelihood under a predicted distribution. In our work, we use instead a multimodal depth estimation technique built on cIMLE~\cite{Li2020MultimodalIS}, which makes our prior robust to ambiguities in depth prediction.

\section{Background}
\subsection{Neural Radiance Fields (NeRF)}
\vspace{-0.2cm}
A neural radiance field~\cite{mildenhall2020nerf}, or NeRF for short, represents a field in 3D space, where each point represents an infinitesimal particle with a certain opacity that emits varying amounts of light along different viewing directions. The opacity at a point is represented as a volume density, and the amount of emitted light is represented as a colour. Mathematically, a NeRF is represented as two parameterized functions,  a volume density $\sigma_{\theta}: \mathbb{R}^3 \to \mathbb{R}_{\geq 0}$ and a colour $\mathbf{c}_{\psi}: \mathbb{R}^3 \times S^2 \to [0,255]^3$. The former maps a 3D coordinate $\mathbf{x} \in \mathbb{R}^3$ to a volume density $\sigma \in \mathbb{R}_{\geq 0}$, and the latter maps a 3D coordinate $\mathbf{x} \in \mathbb{R}^3$ and a viewing direction $\mathbf{d} \in S^2$ to a colour $\mathbf{c} \in [0,255]^3$. 

To render a NeRF from a view, we shoot rays from each pixel on the camera sensor and integrate over the product of colour, volume density and visibility along the ray to arrive at each pixel value. Visibility at a point is represented with transmittance, which accumulates the exponentiated negative volume density multiplicatively up to the point. The higher transmittance at a point, the more visible the point is from the camera. Mathematically, if the camera ray is $\mathbf{r}(t) = \mathbf{o}+t\mathbf{d}$, where $\mathbf{o}$ is the camera centre and $\mathbf{d}$ is the ray direction, the pixel value $\hat{I}_{\theta,\psi}(\mathbf{o}, \mathbf{d})$ we would have is:
\begin{equation}
\begin{split}
    &\hat{I}_{\theta,\psi}(\mathbf{o}, \mathbf{d}) := \int_{t_n}^{t_f} T_{\theta,\mathbf{o},\mathbf{d}}(t)\sigma_\theta(\mathbf{o}+t\mathbf{d})\mathbf{c}_\psi(\mathbf{o}+t\mathbf{d}, \mathbf{d}) \: \mathrm{d}t\\
    &\text{where } \; T_{\theta,\mathbf{o},\mathbf{d}}(t) = \exp(-\int_{t_n}^t\sigma_\theta(\mathbf{o}+s\mathbf{d}) \: \mathrm{d}s). 
\end{split}
\end{equation}
In the expressions above, $t_n$ and $t_f$ denote the points the ray intersects with the near plane and far plane respectively. 

\subsection{Inverse Rendering with NeRF}
Given a set of real-world images from known views, inverse rendering aims to find the scene whose rendered images match the real-world images. If we use $I(\mathbf{o}, \mathbf{d})$ to denote the pixel value for the ray $\mathbf{r}(t) = \mathbf{o}+t\mathbf{d}$, this problem can be cast as an optimization problem, namely:
\begin{equation}
\begin{split}
\min_{\theta,\psi} \sum_\mathbf{o} \sum_\mathbf{d} \Vert \hat{I}_{\theta,\psi}(\mathbf{o}, \mathbf{d}) - I(\mathbf{o}, \mathbf{d}) \Vert_2^2
\end{split}
\end{equation}

\vspace{-0.2cm}
If $\hat{I}_{\theta,\psi}(\mathbf{o}, \mathbf{d})$ is differentiable w.r.t. the parameters $\theta,\psi$, this problem can be tackled straightforwardly with gradient-based optimization. To enable this, the volume density function $\sigma_{\theta}$ and the colour function $\mathbf{c}_{\psi}$ are chosen to be neural networks. As a result, inverse rendering amounts to training the NeRF to reconstruct the scene in a way that is consistent with the images that are given. Once the NeRF is trained, novel view synthesis can be achieved by rendering the NeRF from new, unseen views. 

Since this optimization problem is underconstrained, in general many views are needed to reconstruct the scene accurately. Yet, in practical applications, typically only a few views are available, hence priors are needed to provide sufficient constraints. In this paper, we consider priors in the form of monocular depth estimates from each view. The monocular depth estimates are trained on different datasets from the dataset NeRF is trained on to simulate real-world conditions where there is a domain gap between what the prior was trained on and what the NeRF is trained on. 
\vspace{-0.2cm}
\subsection{Ray Termination Distance}
\vspace{-0.2cm}
In order to leverage depth priors, a natural way is to use them to constrain the ray termination distance. Because NeRF can represent non-opaque surfaces, there is no single ray termination distance. Instead, there is a distribution of ray termination distances. The cumulative distribution function (CDF) of this distribution represents the probability that the ray terminates before reaching a given point. It turns out that the probability density of this distribution $f_{\theta,\mathbf{o},\mathbf{d}}(t)$ is given by~\cite{deng2022depth}:
\begin{equation}
\begin{split}
f_{\theta,\mathbf{o},\mathbf{d}}(t) = T_{\theta,\mathbf{o},\mathbf{d}}(t) \sigma_\theta(\mathbf{o}+t\mathbf{d})
\label{eq:nerf_pdf}
\end{split}
\end{equation}

\vspace{-0.4cm}
\section{Method}
\subsection{Supervising Ray Termination Distance}
\label{section:sup_ray_term}
\vspace{-0.1cm}
If all surfaces were opaque and the ground truth depth were given, supervising NeRF with the ground truth depth would be straightforward. All that is needed is to make the NeRFs distribution of ray termination distances as close as possible to a delta distribution centred at the ground truth depth. However, when the ground truth depth is not known, we have to use monocular depth estimates as supervision. The challenge is that unlike ground truth depth, monocular depth estimates may not be stereo-consistent, due to estimation error and inherent ambiguity. As a result, it may not be possible to find a scene that is consistent with the depth estimates from every view. Therefore, in order to reconstruct the scene, we must model the uncertainty and ambiguity in the depth estimates, which can be mathematically characterized by distributions. Such ambiguities cannot be resolved from a single view and can often be resolved from multiple views. So we need an method that can fuse together uncertain depth estimates from different views. 

A natural way of representing uncertainty is through a probability distribution. When there are ambiguities, the distribution of depth estimates are typically \emph{multimodal}, i.e., their probability densities have disjoint peaks, where each mode represents one interpretation of the scene structure. For example, albedo vs shading ambiguity as shown in Figure~\ref{fig:ambiguities} can result in multiple plausible interpretations and make the distribution multimodal. 

An added challenge arises when non-opaque surfaces are present. In this case, there is no single ground truth depth, and so the distribution of ground truth depths is multimodal. So even after training the NeRF, the distribution of ray termination distances induced by NeRF should be multimodal, as illustrated in the glass walls in Figure~\ref{fig:teaser}. 

Prior methods compute moments (i.e., mean and/or variance) of the distribution of depth estimates or ray termination distance induced by NeRF. For example, DS-NeRF~\cite{deng2022depth} predicts the mean depth estimate and uses reprojection error as a proxy for the variance. It then minimizes the KL divergence from the distribution of ray termination distances induced by NeRF to a Gaussian whose moments match those of the depth estimates. DDP~\cite{roessle2022dense} fits a Gaussian to distribution of ray termination distances and maximizes the likelihood of the depth estimate under the Gaussian.

Distilling complex distributions to moments has the drawback of ignoring their possible multimodality. Because both the distribution of depth estimates and ray termination distances are potentially multimodal, it is important to handle the multimodality.



\begin{figure}[t]
    \centering
    \includegraphics[width=\linewidth]{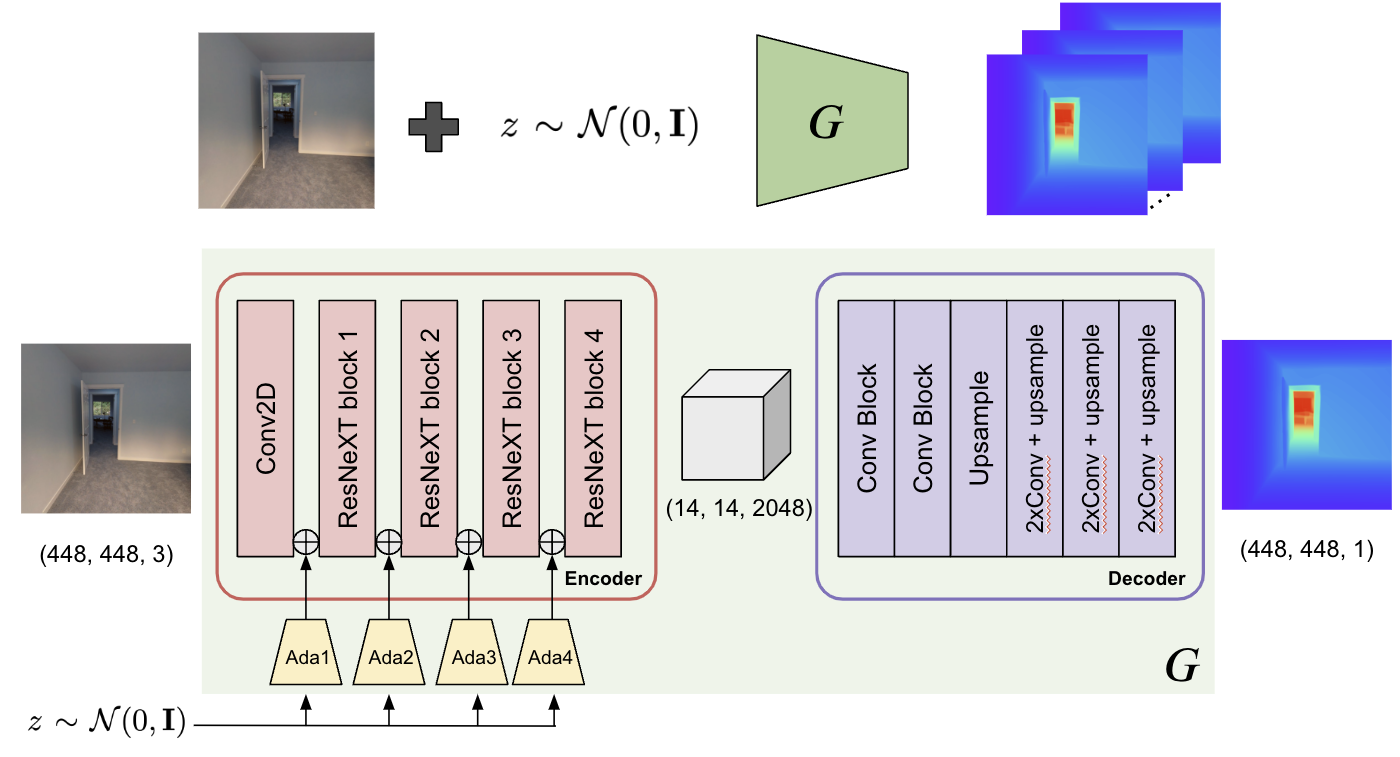}
    \vspace{-0.4cm}
    \caption{{Network Architecture for our Ambiguity-aware Depth Estimates.}}
    \label{fig:ambiguity-aware-prior}
    \vspace{-0.6cm}
\end{figure}
\vspace{-0.1cm}

\subsection{SCADE}
We now present our novel method \emph{Space Carving with Ambiguity-aware Depth Estimates} (\textbf{SCADE}). Our contributions addresses the existing problems and drawbacks mentioned in the previous section. First, SCADE accounts for the \emph{multimodality} of both the distributions of monocular depth estimates and ray termination distances that arise due to inherent ambiguities (Sec~\ref{sec:ambiguity-aware}) and non-opaque surfaces. Second, SCADE is also able to resolve these ambiguities by fusing together information from multiple views through our space carving loss (Sec~\ref{sec:space_carving}) that uses the reverse cross entropy loss. Because this a loss on the distribution rather than moments of the distribution (which was the paradigm in prior work~\cite{deng2022depth,roessle2022dense}), it achieves supervision in 3D rather than 2D. Third, our loss formulation is sample-based (Sec~\ref{sec:sampling}) and so is computationally efficient to optimize.
\vspace{-0.3cm}

\subsubsection{Our Ambiguity-aware Depth Estimates}
\label{sec:ambiguity-aware}
\vspace{-0.2cm}
We handle multimodality of monocular depth distributions by introducing our ambiguity-aware depth estimation module to account for ambiguities in depth estimation from a single view (Figure~\ref{fig:teaser}, \ref{fig:ambiguities}). In contrast to existing monocular depth estimation networks~\cite{Wei2021CVPR} that predict single point estimates for depth, we model the inherent uncertainty in monocular depth estimation by representing the distribution as a \emph{set of samples} generated by a neural network. We propose leveraging conditional Implicit Maximum Likelihood estimation (cIMLE)~\cite{Li2020MultimodalIS} to learn a multimodal distribution of depth estimates. We chose cIMLE rather than conditional GANs~\cite{karras2019style} in order to avoid mode collapse, which can lead to a unimodal distribution. 




Figure~\ref{fig:ambiguity-aware-prior} shows our network architecture. Concretely, we combine cIMLE with a state-of-the-art monocular depth estimation network, LeReS~\cite{Wei2021CVPR}. Our ambiguity-aware depth estimation module ($G$) takes an input image ($I$) and a latent code $z \sim \mathcal{N}(0, \mathbf{I})$ and outputs a conditional depth sample $G(I, z)$ for the input image $I$. To inject randomness into the network, we follow the technique used in~\cite{karras2019style} and incorporate AdaIn layers into the network backbone that predicts a scale and shift to the intermediate network features. Specifically, we added four AdaIn layers to the encoder of the depth estimation backbone.

\vspace{-0.3cm}

\subsubsection{Sample-based Losses on Distributions}
\label{sec:sampling}
\vspace{-0.2cm}

We desire to achieve an agreement between the distributions from our ambiguity-aware prior and the ray termination distance from NeRF to constrain the NeRF optimization under the sparse view set-up using our learned prior. As distributions are continuous, taking their analytical integral is computationally intractable. Thus, in order to define a differentiable loss function for our optimization problem, we result to a sample-based loss on these two distributions.
\vspace{-0.3cm}
\paragraph{Samples from depth prior.} As previously described, we leverage cIMLE to sample from our ambiguity-aware depth prior. Sampling  $z_1, ..., z_M \sim \mathcal{N}(0, \mathbf{I})$, we get depth map samples $G(I, z_1), ..., G(I, z_M)$ for input image $I$. Hence we denote corresponding samples to \emph{estimate} and represent the distribution for ray $\mathbf{r}$ as $y_1, y_2, ..., y_M \sim G_{\mathbf{o},\mathbf{d}}$.
\vspace{-0.3cm}
\paragraph{Samples from NeRF.} For the distribution given by the ray termination distances modelled by NeRF, we sample from its probability density function $f_{\theta, \mathbf{o}, \mathbf{d}}$ (Eq.~\ref{eq:nerf_pdf}) with inverse transform sampling. Concretely, we define the probability mass function for ray $\mathbf{r}$ as $f_{\theta, \mathbf{o}, \mathbf{d}}(t_i)$as given in Eq.~\ref{eq:nerf_pdf} with samples $t_1, ..., t_K$, where $t_i \in [t_n, t_f] \forall i$. By inverse transform sampling, we first compute the cumulative distribution function (CDF) of the ray termination distribution at samples $t_i$, which we denote as $F_{\theta, \mathbf{o}, \mathbf{d}}(t_i) = \sum_{j<i}f_{\theta, \mathbf{o}, \mathbf{d}}(t_i)$. Thus our samples $x_1, x_2, ..., x_N$ of ray termination distances from NeRF are then given by:
\begin{equation}
    \begin{split}
       x_i = F^{-1}_{\theta, \mathbf{o}, \mathbf{d}}(u_i), \\
       \text{where } u_i \sim \mathcal{U}(0, 1)
    \end{split}
\end{equation}


\begin{figure}[t]
    \centering
    \includegraphics[width=0.8\linewidth]{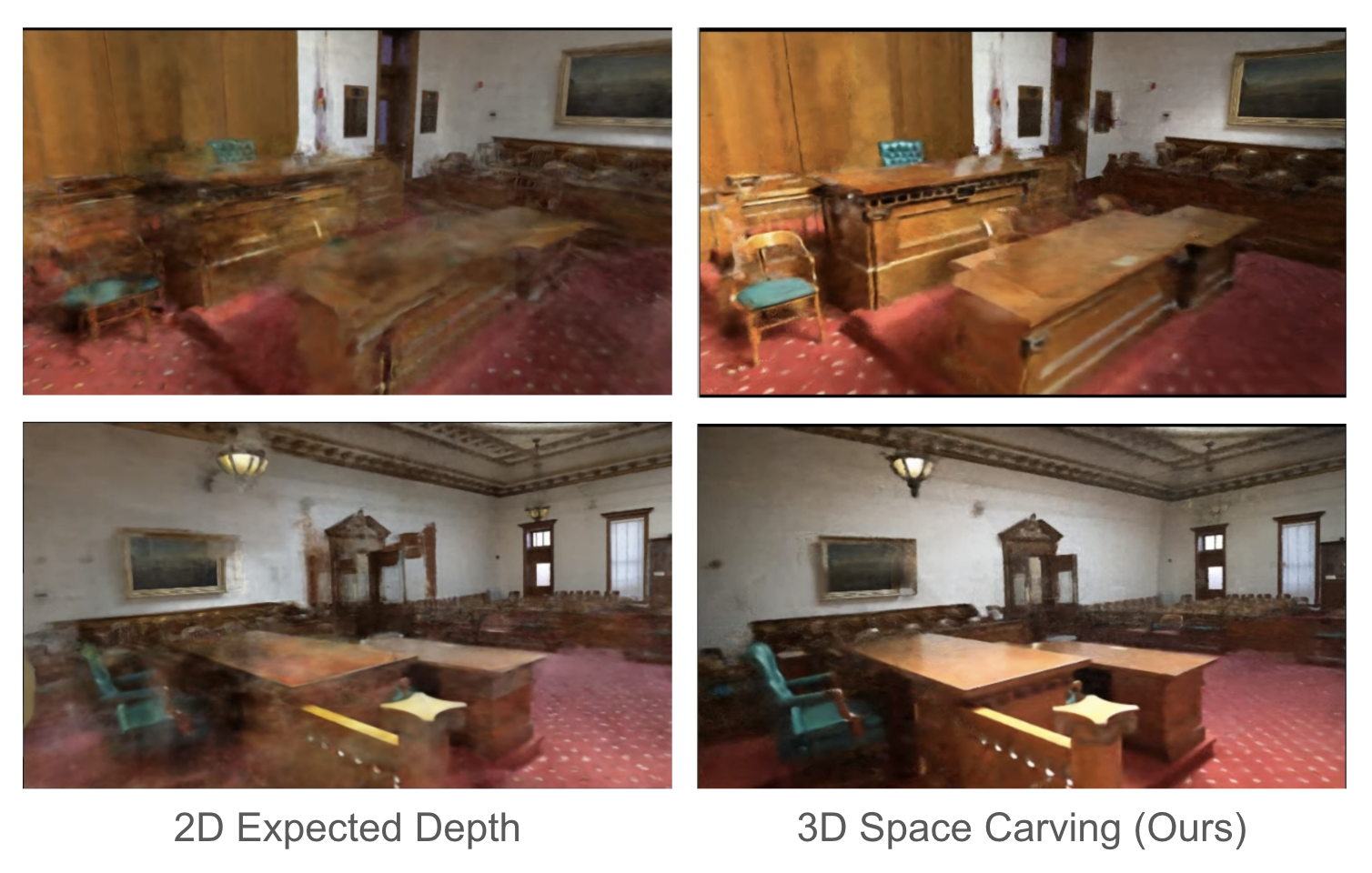}
    \vspace{-0.4cm}
    \caption{Our space carving loss supervises the ray termination in 3D as opposed to existing approaches that supervise on 2D expected depth. As shown, this clears out clouds of dust in space. Moreover supervising in 3D allows for wide baseline input views and still finds the common mode that correctly snaps the surfaces.}
    \label{fig:spacecarving}
    \vspace{-0.5cm}
\end{figure}
\vspace{-0.4cm}
\subsubsection{Our Space Carving Loss}
\label{sec:space_carving}
\vspace{-0.2cm}
We now introduce our novel space carving loss utilizes the samples from both multimodal distributions to constrain the NeRF optimization. The goal is to find a subset of modes captured in the monocular depth distributions from each view that are globally consistent. 
In doing so, we can fuse together the information from the different views, since the inherent ambiguities are only resolved given information from multiple views. This would allow us to find a common shape that is consistent across all views and ``snap'' objects surfaces together. 

We thus desire a loss that has \textbf{mode seeking} behavior, that is the modes of the distribution being trained should be a subset of the modes of the distribution that provides supervision. One such loss is the reverse cross entropy, which is the cross entropy from the NeRF ray termination distance distribution to the distribution of our ambiguity-aware depth estimates:
\begin{equation}
    H(f_{\theta, \mathbf{o}, \mathbf{d}}, G_{\mathbf{o},\mathbf{d}}) = -\mathbb{E}_{f_{\theta, \mathbf{o}, \mathbf{d}}}[\log G_{\mathbf{o},\mathbf{d}}].
    \label{ce_objective}
\end{equation}

As shown in the IMLE papers~\cite{Li2020MultimodalIS,chimle}, this is equivalent to penalizing the minimum of the L2 norm between the samples from the two distributions, please see~\cite{li2018implicit} for the full proof. Hence, our space carving loss is given by
\begin{equation}
    \mathcal{L}_\text{space\_carving}(\mathbf{r}) = \sum_{i\in[N]} \min_{j\in[M]} ||x_i - y_j||^2_2.
\end{equation}

Because our novel space carving loss operates on distributions rather than moments of distributions, we can have different supervision targets for different points along the ray. In contrast, if the loss were to only to operate on moments, 
which are integrals along the ray, this only yields the same supervision target for all points along the ray. Therefore, our loss allows us to supervise in 3D, unlike prior methods~\cite{deng2022depth, roessle2022dense} (which use losses on moments) that supervise in 2D. 3D supervision allows us to clear out dust in space, since in order to move a sample from the ray termination distance distribution farther in depth, the loss would have to decrease the probability density of \emph{all} the points in front of it. These advantages are highlighted in Figure~\ref{fig:spacecarving}.

Our total loss to optimize and train our NeRF model is given by
\begin{equation}
    \mathcal{L} = \mathcal{L}_{\text{photometric}} + \lambda \mathcal{L}_{\text{space\_carving}},
\end{equation}

\noindent where $\mathcal{L}_\text{photometric}$ is the standard MSE loss on the predicted and ground truth image. See Fig.~\ref{fig:teaser} for our overall pipeline.

\section{Results}
\vspace{-0.2cm}
In this section, we present our experimental evaluation to demonstrate the advantages of \textbf{SCADE}. 

\subsection{Datasets and Evaluation Metrics}
\vspace{-0.2cm}
We evaluate our method on ScanNet~\cite{dai2017scannet} and an in-the-wild dataset we collected~\footnote{Please also see supplement for results on the Tanks and Temples dataset~\cite{Knapitsch2017}.}.
We use the sparse-view ScanNet data used by DDP~\cite{roessle2022dense} in their evaluations, which comprises of three sample scenes each with 18 to 20 train images and 8 test images. To further test the robustness of our method, we further evaluated our method on three in-the-wild scenes collected using an iPhoneX. We captured sparse views in three different scenes -- basement, kitchen and lounge, where each scene has 18 to 23 train images and 8 test images. Following similar data preprocessing steps as DDP~\cite{roessle2022dense}, we ran SfM~\cite{schoenberger2016sfm} on all images to obtain camera poses for NeRF training. For quantitative comparison, we follow the original NeRF~\cite{mildenhall2020nerf} paper and report the PSNR, SSIM~\cite{wang2004image} and LPIPS~\cite{zhang2018unreasonable} on the novel test views. 

\subsection{Implementation Details}
\vspace{-0.2cm}
We train our ambiguity-aware prior on the Taskonomy~\cite{zamir2018taskonomy} dataset. We initialize the weights with the pretrained LeReS~\cite{Wei2021CVPR} model. We use $M=20$ depth estimates in our experiments. Please see supplement for additional implementation details. 

\begin{figure}[t]
\vspace{-2mm}
	\begin{center}
		\includegraphics[width=\linewidth]{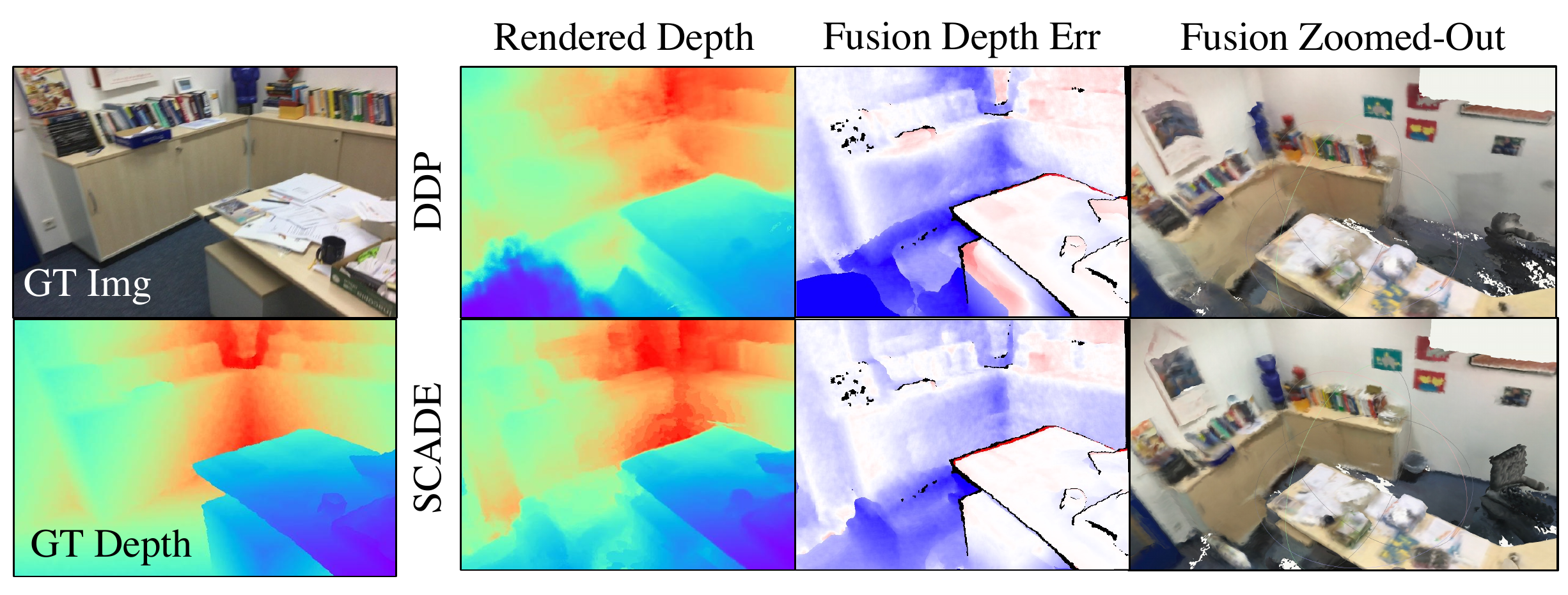}
	\end{center}
    \vspace{-0.6cm}
    \caption{\textbf{Depth and Fusion Comparison.} Depth map and fusion comparison between SCADE and DDP.}
	\vspace{-0.3cm}
	\label{fig:depth_fusion}
\end{figure}

\begin{table}
\centering
\setlength\tabcolsep{2pt}
\begin{tabularx}{\linewidth}{c|C|C|C}
    \toprule
    & PSNR $\uparrow$& SSIM $\uparrow$& LPIPS $\downarrow$\\
    \midrule
    Vanilla NeRF~\cite{mildenhall2020nerf} & 19.03 & 0.670 & 0.398\\
    NerfingMVS~\cite{Wei2021CVPR} & 16.29 & 0.626 & 0.502\\
    IBRNet~\cite{wang2021ibrnet}& 13.25 & 0.529 & 0.673\\
    MVSNeRF~\cite{mvsnerf} & 15.67 & 0.533 & 0.635\\
    DS-NeRF~\cite{deng2022depth} & 20.85 & 0.713 & 0.344\\
    DDP~\cite{roessle2022dense} & 19.29 & 0.695 & 0.368\\
    \midrule
    \textbf{SCADE} (Ours) & \textbf{21.54} & \textbf{0.732} & \textbf{0.292}\\
    \bottomrule
\end{tabularx}
\vspace{-0.2cm}
\caption{\textbf{ScanNet Results}. Results for DS-NeRF and NerfingMVS follow what was reported in prior literature~\cite{roessle2022dense}. Because our setting requires out-of-domain priors, the results for DDP are with out-of-domain priors. The results of DDP with in-domain priors are (20.96, 0.737, 0.236) for PSNR, SSIM and LPIPS, respectively.}
\label{tbl:ScanNet_results}
\vspace{-1.2\baselineskip} 
\end{table}

\begin{figure*}[t]
    \centering
    \includegraphics[width=0.95\linewidth]{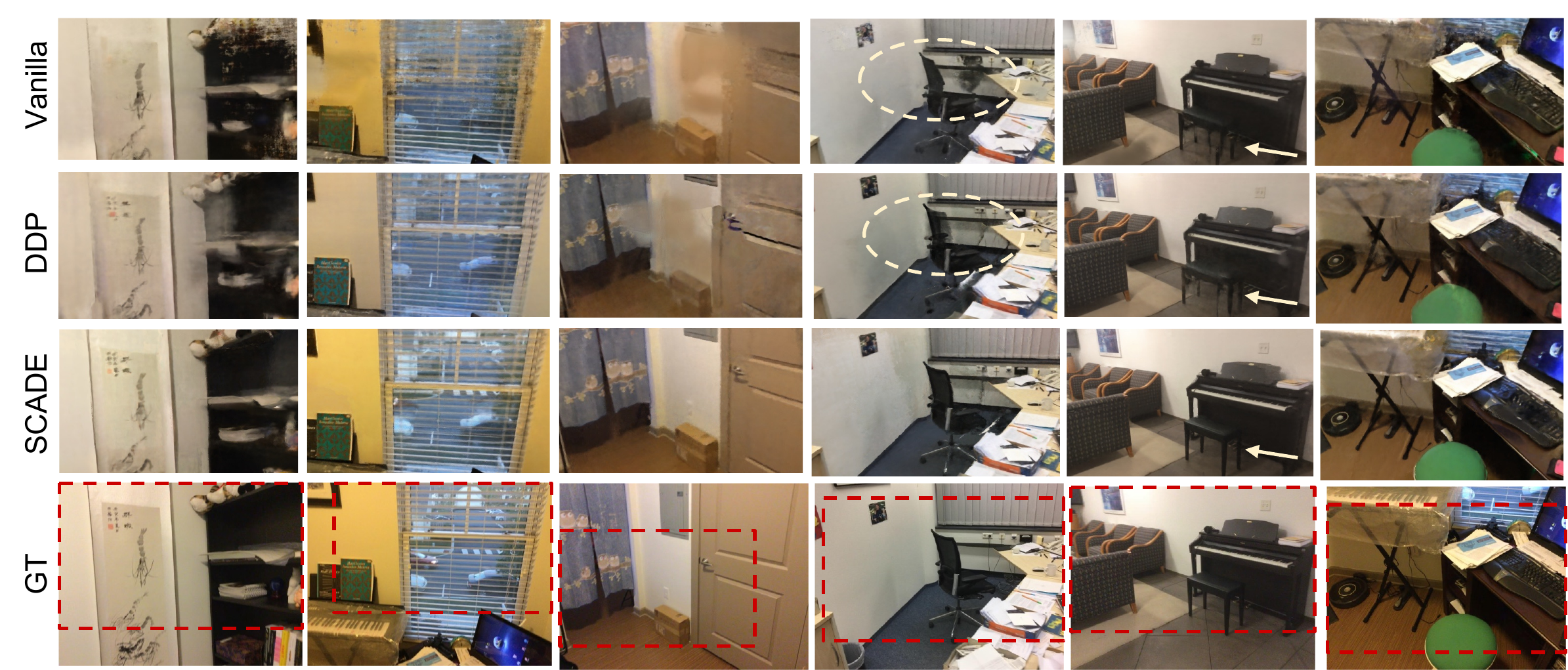}
    \vspace{-0.2cm}
    \caption{{\bf{Qualitative results on ScanNet}.}}
    \vspace{-0.4cm}
    \label{fig:ScanNet}
\end{figure*}

\subsection{Experiments on ScanNet}
\vspace{-0.1cm}
We evaluate on ScanNet following the evaluation from DDP~\cite{roessle2022dense}. We compare to the original NeRF~\cite{mildenhall2020nerf} (\textbf{Vanilla NeRF}) and the recent state-of-the-art NeRF with depth prior-based supervision, Dense Depth Prior~\cite{roessle2022dense} (\textbf{DDP}). Table~\ref{tbl:ScanNet_results} shows SCADE quantitatively outperforming the baselines. Because we are interested in simulating real-world conditions with a domain gap between the prior and the NeRF, our setting requires the use of out-of-domain priors. Since DDP uses a in-domain prior, we retrain DDP's depth completion network using their official code and hyperparameters on Taskonomy~\cite{zamir2018taskonomy}, \ie the same out-of-domain dataset that our prior is trained on.

Figure~\ref{fig:ScanNet} shows our qualitative results. As shown, compared to the baselines, SCADE is able to avoid producing the clouds of dust that are present in the results of baselines (first, third and last column). Moreover, SCADE is also able to snap and recover objects in the scene such as the details on the blinds (second column), the back of the chair (fourth column) and the legs of the piano stool (fifth column). Moreover, Fig.~\ref{fig:depth_fusion} shows rendered depthmaps and fusion results using~\cite{zeng20163dmatch} on a Scannet scene. Notice that SCADE is able to recover better geometry compared to DDP -- see corner of the calendar, cabinets and office chair in the right image.

\subsection{Experiments on In-the-Wild Data}
\vspace{-0.1cm}
To further test the robustness of SCADE, we further evaluate on data collected in-the-wild with a standard phone camera (iPhoneX). The intrinsics for different views are different and are not known to the algorithm. As shown in Table~\ref{tbl:wild_results}, \textbf{SCADE} with the same out-of-domain prior (trained on Taskonomy) outperforms the baselines on in-the-wild captured data, which demonstrates its robustness. Figure~\ref{fig:in_the_wild} shows qualitative examples. Interestingly, we are able to recover the room behind the glass better the baselines, whose results are more blurry (first, third and fourth column). We are also able to recover objects behind the glass such as the monitor and chair (first column). As discussed in Sect. \ref{section:sup_ray_term}, non-opaque surfaces are the most challenging because both the ground truth distribution of ray termination distances induced by NeRF and the distribution of estimated depths from the prior must be multimodal. The fact that we do well validates the ability of our model to capture the multimodal distributions. 

Similar to the ScanNet results, we are able to recover crisp shapes of objects such as the thin table leg and the white chair (second and last column), and we are also able to clear up dust near the microwave and printer compared to the baselines (fifth and last column). 

\begin{figure*}[t]
    \centering
    \includegraphics[width=0.95\linewidth]{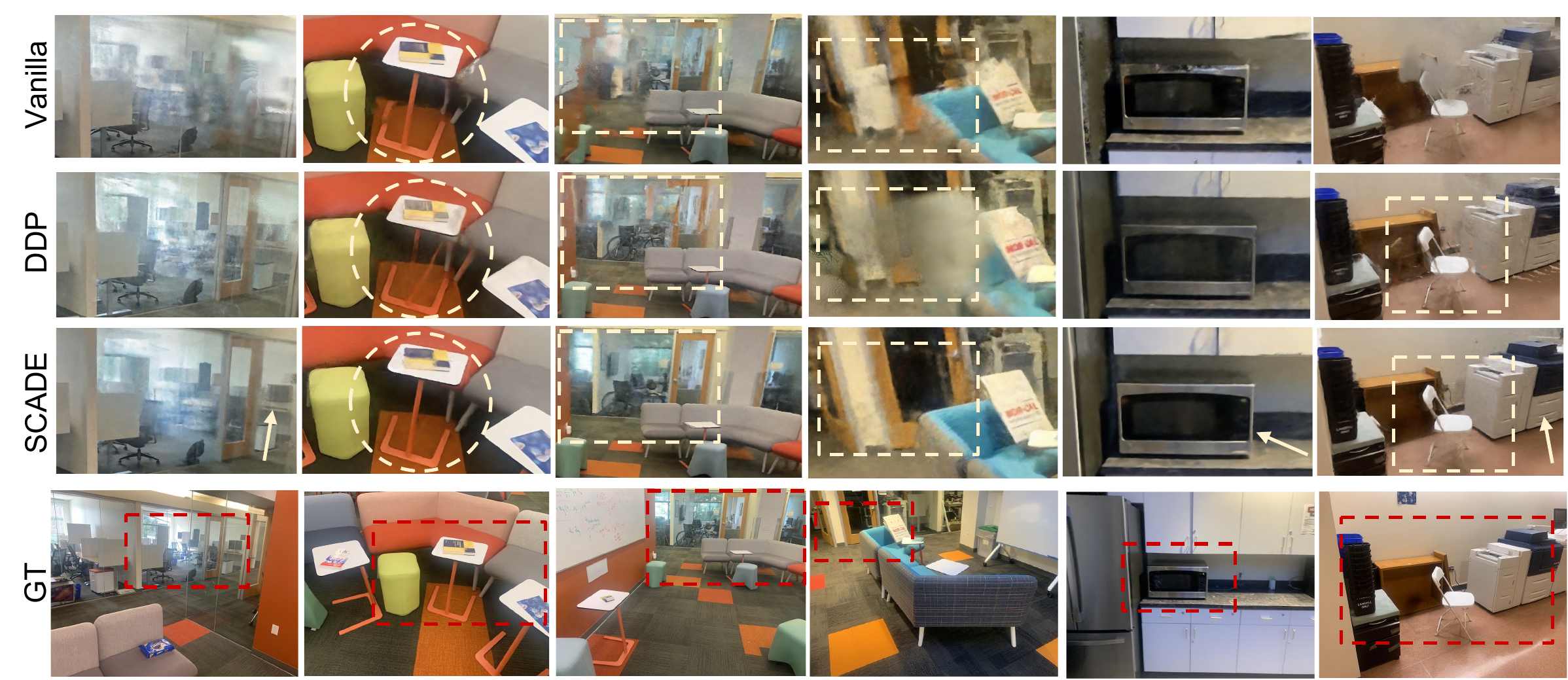}
    \vspace{-0.3cm}
    \caption{{\bf{Qualitative results on In-the-Wild data}.}}
    \label{fig:in_the_wild}
    \vspace{-0.4cm}
\end{figure*}

\begin{table}
\centering
\setlength\tabcolsep{2pt}
\begin{tabularx}{\linewidth}{c|C|C|C}
    \toprule
    &  PSNR $\uparrow$& SSIM $\uparrow$& LPIPS $\downarrow$\\
    \midrule
    Vanilla NeRF~\cite{mildenhall2020nerf} & 20.46 & 0.713 & 0.398 \\
    DDP~\cite{roessle2022dense} & 21.28 & 0.727 & 0.366 \\
    \midrule
    \textbf{SCADE} (Ours) & \textbf{22.82} & \textbf{0.743} & \textbf{0.347}\\
    \bottomrule
\end{tabularx}
\vspace{-0.2cm}
\caption{\textbf{In-the-wild Results}. }
\label{tbl:wild_results}
\vspace{-1.5\baselineskip} 
\end{table}

\subsection{Ablation Study}
\paragraph{Supervision on full distribution vs moments} We first validate the importance of using our novel space carving loss that supervises the full distribution of ray termination distance rather than just its moments. The latter integrates along the ray, and so provides one target value for the entire ray, which effectively makes it 2D supervision. The former provides different target values for different samples along the same ray, which makes it 3D supervision. We adapt our method to use 2D supervision proposed in the recent MonoSDF~\cite{yu2022monosdf}, which computes the expected ray termination distance and aligns the output depth map with a monocular depth estimation prior using the MiDaS~\cite{ranftl2020towards} loss. We use our learned prior as the monocular depth estimate. Table~\ref{tbl:ablation} validates the effectiveness of our space carving loss.

\vspace{-0.25cm}
\paragraph{Multimodal prior vs unimodal prior.} Our prior models does not constrain the depth distribution to a particular form and allows it to be multimodal. In contrast, DDP's depth completion prior assumes a Gaussian, which is unimodal. We ablate our prior against DDP's by sampling from their Gaussian distribution. We train with our space carving loss using both a single sample, as well as $M$ samples drawn from DDP's prior. As shown in Table~\ref{tbl:ablation}, while both perform better than the 2D supervision provided by MonoSDF, but they are subpar compared to using our multimodal prior.

\vspace{-0.25cm}
\paragraph{Multiple vs single sample.} Finally, we also ablate on using a single sample vs multiple samples for our space carving loss. For the single sample set-up, we supervise using the sample mean of the depth estimates, which is equivalent to the maximum likelihood estimate of the depth under Gaussian likelihood. Results in Table~\ref{tbl:ablation} show the importance of using multiple samples for our space carving loss.

\begin{table}
\centering
\setlength\tabcolsep{2pt}
\begin{tabularx}{\linewidth}{c|C|C|C}
    \toprule
    &  PSNR $\uparrow$& SSIM $\uparrow$& LPIPS $\downarrow$\\
    \midrule
     MonoSDF supervision & 20.13 & 0.710 & 0.332 \\
     DDP prior - single sample & 20.85 & 0.712 & 0.320 \\
     DDP prior - multiple samples & 21.00 & 0.718 & 0.316 \\
     Our prior - single sample & 21.22 & 0.714 & 0.318 \\
    \midrule
    \textbf{SCADE} (Ours) & \textbf{21.54} & \textbf{0.732} & \textbf{0.292}\\
    \bottomrule
\end{tabularx}
\vspace{-0.2cm}
\caption{\textbf{Ablation Study}. 
Results on the Scannet dataset. MonoSDF supervision refers to supervising with MonoSDF loss on expected ray termination distance using our prior. }
\vspace{-0.4cm}
\label{tbl:ablation}
\end{table}

\vspace{-0.2cm}
\paragraph{Sparsity.} We further show addt'l results under varying number of views in Tab~\ref{tbl:sparsity_table}. Note that in general sparsity is not fully reflected by the absolute number of views, because all else being equal, a larger scene requires more views to attain complete coverage. 
Following prior work~\cite{roessle2022dense}, we also report the average number of views that see the same point as another measure of sparsity. As shown, our setting is at the edge of the theoretical lower limit of 2 for general-purpose reconstruction, which shows view sparsity.

\begin{table}
{
\footnotesize
\centering
\setlength\tabcolsep{2pt}
\begin{tabularx}{\linewidth}{c|c|c|c|c}
    \toprule
    ave. \#views visible & 1.87 & 1.98 & 2.01 & 2.2 \\
    absolute \#views & 18 & 20 & 22 & 24 \\
    \midrule
    DDP / SCADE & 19.35/ \textbf{21.66} & 22.4/ \textbf{23.67}  &  23.10/ \textbf{24.00} & 23.56/ \textbf{24.84}  \\
    \bottomrule
\end{tabularx}
\vspace{-0.3cm}
\caption{PSNR for DDP~\cite{roessle2022dense}/SCADE on scene781 of Scannet.}
\label{tbl:sparsity_table}
}
\vspace{-0.1cm}
\end{table}

\begin{figure}[t]
\vspace{-2mm}
	\begin{center}
		\includegraphics[width=\linewidth]{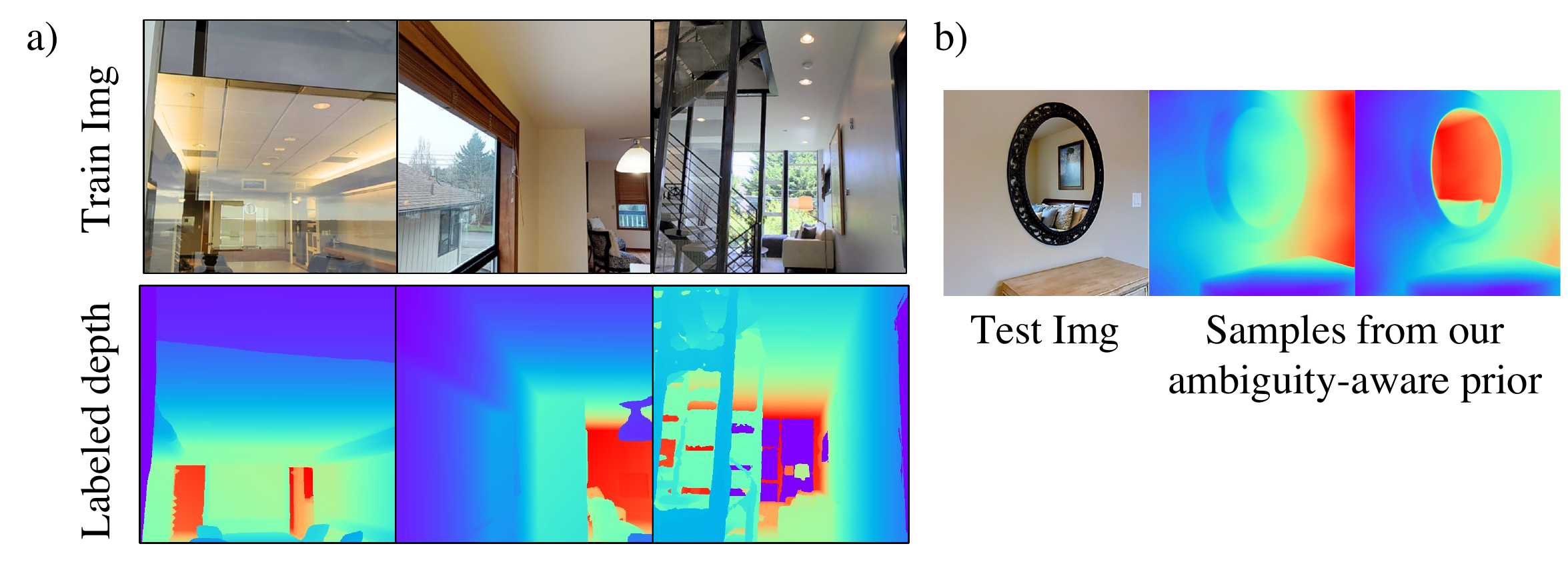}
	\end{center}
    \vspace{-0.6cm}
    \caption{\textbf{Depth Mode Discussion.} a) Train images from Taskonomy~\cite{zamir2018taskonomy} and their labels. Notice that non-opaque surfaces are labelled differently. b) Output of our prior on reflective surfaces.}
	\vspace{-0.4cm}
	\label{fig:ambiguity-discussion}
\end{figure}

\subsection{Discussion on our Ambiguity-Aware Prior}
\vspace{-0.2cm}
We provide addt'l details on our ambiguity-aware prior on i) how and why it works, ii) its performance on reflective surfaces, which is a common failure case for depth estimate, and iii) provide an intuition on how what our depth modes look like. i) We achieve variable depth modes by exploiting inconsistently labeled training data. As shown in Fig~\ref{fig:ambiguity-discussion}-a, in Taskonomy~\cite{zamir2018taskonomy}, different training images with non-opaque surfaces label depth differently: shooting through the glass (left), on the glass (middle), or a mixture of both (right). Despite multiple possible depth labels, each image only has one ground truth label. Training with cIMLE allows our prior to model these multiple possible (ambiguous) outputs through sampling\footnote{See supplement for more qualitative examples on our ambiguity-aware depth estimates.}, even when given only one label per image. Interestingly when testing our prior on a test image with a mirror, we find that it is able to capture variable modes on reflective surfaces, including the ``correct" flat surface as shown in Fig~\ref{fig:ambiguity-discussion}-b. An intuition of what the depth modes look like is for example if a ray intersects $n-1$ non-opaque surfaces, the $n$ modes are intersections with the non-opaque surfaces and the terminating point on the opaque surface.

\vspace{-0.2cm}
\section{Conclusion}
\vspace{-0.2cm}
In this paper, we present a new approach towards NeRF reconstruction that can work with a modest number of in-the-wild views of an indoor scene. We address the under-constrained nature of this problem by regularizing the NeRF optimization with additional depth estimates for each view. Our key technical contribution is to model multimodality in the depth estimates, which can capture inherent ambiguities in monocular depth estimation as well as the possible presence of non-opaque surfaces. We resolve ambiguities using a novel space carving loss that fuses the multimodal depth estimates from different views and seeks the modes that are consistent across views so as to arrive at a globally consistent 3D reconstruction. The improved recovery of shape and appearance enables higher fidelity novel view synthesis from sparse views. 
\vspace{-0.4cm}
\paragraph{Limitations and Future Work.} The performance of our method is constrained by the quality of the monocular depth priors. While we found that our prior generalizes well across domains, if the domain gap is too great, the performance of our method will degrade. A future direction would to be detect when this happens and dynamically adjust the strength of depth supervision in response. 
\vspace{-0.4cm}
\paragraph{Acknowledgements.} We sincerely thank the Teleportation and CCI team in Google for all the insightful discussions during the summer. We also thank Mirko Visontai for internship logistics, Guandao Yang for experiment set-up assist, and Weicheng Kuo for looking over the paper. 

{\small
\bibliographystyle{ieee_fullname}
\bibliography{egbib}

\begin{thebibliography}{10}\itemsep=-1pt

\bibitem{bi2020neural}
Sai Bi, Zexiang Xu, Pratul Srinivasan, Ben Mildenhall, Kalyan Sunkavalli,
  Milo{\v{s}} Ha{\v{s}}an, Yannick Hold-Geoffroy, David Kriegman, and Ravi
  Ramamoorthi.
\newblock Neural reflectance fields for appearance acquisition.
\newblock {\em arXiv preprint arXiv:2008.03824}, 2020.

\bibitem{chang2022rcmvsnet}
Di Chang, Alja{\v{z}} Bo{\v{z}}i{\v{c}}, Tong Zhang, Qingsong Yan, Yingcong
  Chen, Sabine S{\"u}sstrunk, and Matthias Nie{\ss}ner.
\newblock Rc-mvsnet: Unsupervised multi-view stereo with neural rendering.
\newblock {\em ECCV 2022}, 2022.

\bibitem{mvsnerf}
Anpei Chen, Zexiang Xu, Fuqiang Zhao, Xiaoshuai Zhang, Fanbo Xiang, Jingyi Yu,
  and Hao Su.
\newblock Mvsnerf: Fast generalizable radiance field reconstruction from
  multi-view stereo.
\newblock In {\em Proceedings of the IEEE/CVF International Conference on
  Computer Vision}, pages 14124--14133, 2021.

\bibitem{cho2013gaussian}
Kyung~Hyun Cho, Tapani Raiko, and Alexander Ilin.
\newblock Gaussian-bernoulli deep boltzmann machine.
\newblock In {\em The 2013 International Joint Conference on Neural Networks
  (IJCNN)}, pages 1--7. IEEE, 2013.

\bibitem{dai2017scannet}
Angela Dai, Angel~X Chang, Manolis Savva, Maciej Halber, Thomas Funkhouser, and
  Matthias Nie{\ss}ner.
\newblock Scannet: Richly-annotated 3d reconstructions of indoor scenes.
\newblock In {\em Proceedings of the IEEE conference on computer vision and
  pattern recognition}, pages 5828--5839, 2017.

\bibitem{deng2022depth}
Kangle Deng, Andrew Liu, Jun-Yan Zhu, and Deva Ramanan.
\newblock Depth-supervised nerf: Fewer views and faster training for free.
\newblock In {\em Proceedings of the IEEE/CVF Conference on Computer Vision and
  Pattern Recognition}, pages 12882--12891, 2022.

\bibitem{eigen2014depth}
David Eigen, Christian Puhrsch, and Rob Fergus.
\newblock Depth map prediction from a single image using a multi-scale deep
  network.
\newblock {\em Advances in neural information processing systems}, 27, 2014.

\bibitem{godard2017unsupervised}
Cl{\'e}ment Godard, Oisin Mac~Aodha, and Gabriel~J Brostow.
\newblock Unsupervised monocular depth estimation with left-right consistency.
\newblock In {\em Proceedings of the IEEE conference on computer vision and
  pattern recognition}, pages 270--279, 2017.

\bibitem{gu2019cas}
Xiaodong Gu, Zhiwen Fan, Siyu Zhu, Zuozhuo Dai, Feitong Tan, and Ping Tan.
\newblock Cascade cost volume for high-resolution multi-view stereo and stereo
  matching.
\newblock 2019.

\bibitem{hinton2012practical}
Geoffrey~E Hinton.
\newblock A practical guide to training restricted boltzmann machines.
\newblock In {\em Neural networks: Tricks of the trade}, pages 599--619.
  Springer, 2012.

\bibitem{huang2017arbitrary}
Xun Huang and Serge Belongie.
\newblock Arbitrary style transfer in real-time with adaptive instance
  normalization.
\newblock In {\em Proceedings of the IEEE international conference on computer
  vision}, pages 1501--1510, 2017.

\bibitem{isola2017image}
Phillip Isola, Jun-Yan Zhu, Tinghui Zhou, and Alexei~A Efros.
\newblock Image-to-image translation with conditional adversarial networks.
\newblock In {\em Proceedings of the IEEE conference on computer vision and
  pattern recognition}, pages 1125--1134, 2017.

\bibitem{Jain_2021_ICCV}
Ajay Jain, Matthew Tancik, and Pieter Abbeel.
\newblock Putting nerf on a diet: Semantically consistent few-shot view
  synthesis.
\newblock In {\em Proceedings of the IEEE/CVF International Conference on
  Computer Vision (ICCV)}, pages 5885--5894, October 2021.

\bibitem{jang2021codenerf}
Wonbong Jang and Lourdes Agapito.
\newblock Codenerf: Disentangled neural radiance fields for object categories.
\newblock In {\em ICCV}, 2021.

\bibitem{karras2019style}
Tero Karras, Samuli Laine, and Timo Aila.
\newblock A style-based generator architecture for generative adversarial
  networks.
\newblock In {\em Proceedings of the IEEE/CVF conference on computer vision and
  pattern recognition}, pages 4401--4410, 2019.

\bibitem{kendall2017uncertainties}
Alex Kendall and Yarin Gal.
\newblock What uncertainties do we need in bayesian deep learning for computer
  vision?
\newblock {\em Advances in neural information processing systems}, 30, 2017.

\bibitem{kingma2014adam}
Diederik~P Kingma and Jimmy Ba.
\newblock Adam: A method for stochastic optimization.
\newblock {\em arXiv preprint arXiv:1412.6980}, 2014.

\bibitem{Knapitsch2017}
Arno Knapitsch, Jaesik Park, Qian-Yi Zhou, and Vladlen Koltun.
\newblock Tanks and temples: Benchmarking large-scale scene reconstruction.
\newblock {\em ACM Transactions on Graphics}, 36(4), 2017.

\bibitem{space_carving}
K.N. Kutulakos and S.M. Seitz.
\newblock A theory of shape by space carving.
\newblock In {\em ICCV}, 1999.

\bibitem{li2018implicit}
Ke Li and Jitendra Malik.
\newblock Implicit maximum likelihood estimation.
\newblock {\em arXiv preprint arXiv:1809.09087}, 2018.

\bibitem{Li2020MultimodalIS}
Ke Li*, Shichong Peng*, Tianhao Zhang*, and Jitendra Malik.
\newblock Multimodal image synthesis with conditional implicit maximum
  likelihood estimation.
\newblock {\em International Journal of Computer Vision}, May 2020.

\bibitem{li2021neural}
Zhengqi Li, Simon Niklaus, Noah Snavely, and Oliver Wang.
\newblock Neural scene flow fields for space-time view synthesis of dynamic
  scenes.
\newblock In {\em Proceedings of the IEEE/CVF Conference on Computer Vision and
  Pattern Recognition}, pages 6498--6508, 2021.

\bibitem{li2018megadepth}
Zhengqi Li and Noah Snavely.
\newblock Megadepth: Learning single-view depth prediction from internet
  photos.
\newblock In {\em Proceedings of the IEEE conference on computer vision and
  pattern recognition}, pages 2041--2050, 2018.

\bibitem{martinbrualla2020nerfw}
Ricardo Martin-Brualla, Noha Radwan, Mehdi S.~M. Sajjadi, Jonathan~T. Barron,
  Alexey Dosovitskiy, and Daniel Duckworth.
\newblock {NeRF in the Wild: Neural Radiance Fields for Unconstrained Photo
  Collections}.
\newblock In {\em CVPR}, 2021.

\bibitem{max1995optical}
Nelson Max.
\newblock Optical models for direct volume rendering.
\newblock {\em IEEE Transactions on Visualization and Computer Graphics},
  1(2):99--108, 1995.

\bibitem{mildenhall2020nerf}
Ben Mildenhall, Pratul~P Srinivasan, Matthew Tancik, Jonathan~T Barron, Ravi
  Ramamoorthi, and Ren Ng.
\newblock Nerf: Representing scenes as neural radiance fields for view
  synthesis.
\newblock In {\em Computer Vision--ECCV 2020: 16th European Conference,
  Glasgow, UK, August 23--28, 2020, Proceedings, Part I}, pages 405--421, 2020.

\bibitem{Niemeyer2021Regnerf}
Michael Niemeyer, Jonathan~T. Barron, Ben Mildenhall, Mehdi S.~M. Sajjadi,
  Andreas Geiger, and Noha Radwan.
\newblock Regnerf: Regularizing neural radiance fields for view synthesis from
  sparse inputs.
\newblock In {\em CVPR}, 2022.

\bibitem{Oechsle2021ICCV}
Michael Oechsle, Songyou Peng, and Andreas Geiger.
\newblock Unisurf: Unifying neural implicit surfaces and radiance fields for
  multi-view reconstruction.
\newblock In {\em International Conference on Computer Vision (ICCV)}, 2021.

\bibitem{park2021nerfies}
Keunhong Park, Utkarsh Sinha, Jonathan~T. Barron, Sofien Bouaziz, Dan~B
  Goldman, Steven~M. Seitz, and Ricardo Martin-Brualla.
\newblock Nerfies: Deformable neural radiance fields.
\newblock {\em ICCV}, 2021.

\bibitem{chimle}
Shichong Peng, Seyed~Alireza Moazenipourasil, and Ke Li.
\newblock Chimle: Conditional hierarchical imle.
\newblock In {\em NeurIPS}, 2022.

\bibitem{qi2018geonet}
Xiaojuan Qi, Renjie Liao, Zhengzhe Liu, Raquel Urtasun, and Jiaya Jia.
\newblock Geonet: Geometric neural network for joint depth and surface normal
  estimation.
\newblock In {\em Proceedings of the IEEE Conference on Computer Vision and
  Pattern Recognition}, pages 283--291, 2018.

\bibitem{ranftl2020towards}
Ren{\'e} Ranftl, Katrin Lasinger, David Hafner, Konrad Schindler, and Vladlen
  Koltun.
\newblock Towards robust monocular depth estimation: Mixing datasets for
  zero-shot cross-dataset transfer.
\newblock {\em IEEE transactions on pattern analysis and machine intelligence},
  2020.

\bibitem{ranzato2007unified}
Marc’Aurelio Ranzato, Y-Lan Boureau, Sumit Chopra, and Yann LeCun.
\newblock A unified energy-based framework for unsupervised learning.
\newblock In {\em Artificial Intelligence and Statistics}, pages 371--379.
  PMLR, 2007.

\bibitem{roessle2022dense}
Barbara Roessle, Jonathan~T Barron, Ben Mildenhall, Pratul~P Srinivasan, and
  Matthias Nie{\ss}ner.
\newblock Dense depth priors for neural radiance fields from sparse input
  views.
\newblock In {\em Proceedings of the IEEE/CVF Conference on Computer Vision and
  Pattern Recognition}, pages 12892--12901, 2022.

\bibitem{saxena2008make3d}
Ashutosh Saxena, Min Sun, and Andrew~Y Ng.
\newblock Make3d: Learning 3d scene structure from a single still image.
\newblock {\em IEEE transactions on pattern analysis and machine intelligence},
  31(5):824--840, 2008.

\bibitem{schoenberger2016sfm}
Johannes~Lutz Sch\"{o}nberger and Jan-Michael Frahm.
\newblock Structure-from-motion revisited.
\newblock In {\em Conference on Computer Vision and Pattern Recognition
  (CVPR)}, 2016.

\bibitem{silberman2012indoor}
Nathan Silberman, Derek Hoiem, Pushmeet Kohli, and Rob Fergus.
\newblock Indoor segmentation and support inference from rgbd images.
\newblock In {\em European conference on computer vision}, pages 746--760.
  Springer, 2012.

\bibitem{tancik2020meta}
Matthew Tancik, Ben Mildenhall, Terrance Wang, Divi Schmidt, Pratul~P.
  Srinivasan, Jonathan~T. Barron, and Ren Ng.
\newblock Learned initializations for optimizing coordinate-based neural
  representations.
\newblock In {\em CVPR}, 2021.

\bibitem{tancik2020fourier}
Matthew Tancik, Pratul Srinivasan, Ben Mildenhall, Sara Fridovich-Keil, Nithin
  Raghavan, Utkarsh Singhal, Ravi Ramamoorthi, Jonathan Barron, and Ren Ng.
\newblock Fourier features let networks learn high frequency functions in low
  dimensional domains.
\newblock {\em Advances in Neural Information Processing Systems},
  33:7537--7547, 2020.

\bibitem{vaswani2017attention}
Ashish Vaswani, Noam Shazeer, Niki Parmar, Jakob Uszkoreit, Llion Jones,
  Aidan~N Gomez, {\L}ukasz Kaiser, and Illia Polosukhin.
\newblock Attention is all you need.
\newblock {\em Advances in neural information processing systems}, 30, 2017.

\bibitem{verbin2022refnerf}
Dor Verbin, Peter Hedman, Ben Mildenhall, Todd Zickler, Jonathan~T. Barron, and
  Pratul~P. Srinivasan.
\newblock {Ref-NeRF}: Structured view-dependent appearance for neural radiance
  fields.
\newblock {\em CVPR}, 2022.

\bibitem{wang2021neus}
Peng Wang, Lingjie Liu, Yuan Liu, Christian Theobalt, Taku Komura, and Wenping
  Wang.
\newblock Neus: Learning neural implicit surfaces by volume rendering for
  multi-view reconstruction.
\newblock {\em NeurIPS}, 2021.

\bibitem{wang2021ibrnet}
Qianqian Wang, Zhicheng Wang, Kyle Genova, Pratul Srinivasan, Howard Zhou,
  Jonathan~T. Barron, Ricardo Martin-Brualla, Noah Snavely, and Thomas
  Funkhouser.
\newblock Ibrnet: Learning multi-view image-based rendering.
\newblock In {\em CVPR}, 2021.

\bibitem{wang2004image}
Zhou Wang, Alan~C Bovik, Hamid~R Sheikh, and Eero~P Simoncelli.
\newblock Image quality assessment: from error visibility to structural
  similarity.
\newblock {\em IEEE transactions on image processing}, 13(4):600--612, 2004.

\bibitem{wei2021nerfingmvs}
Yi Wei, Shaohui Liu, Yongming Rao, Wang Zhao, Jiwen Lu, and Jie Zhou.
\newblock Nerfingmvs: Guided optimization of neural radiance fields for indoor
  multi-view stereo.
\newblock In {\em ICCV}, 2021.

\bibitem{xie2016deep3d}
Junyuan Xie, Ross Girshick, and Ali Farhadi.
\newblock Deep3d: Fully automatic 2d-to-3d video conversion with deep
  convolutional neural networks.
\newblock In {\em European conference on computer vision}, pages 842--857.
  Springer, 2016.

\bibitem{yariv2021volume}
Lior Yariv, Jiatao Gu, Yoni Kasten, and Yaron Lipman.
\newblock Volume rendering of neural implicit surfaces.
\newblock In {\em NeurIPS}, 2021.

\bibitem{yin2022towards}
Wei Yin, Jianming Zhang, Oliver Wang, Simon Niklaus, Simon Chen, Yifan Liu, and
  Chunhua Shen.
\newblock Towards accurate reconstruction of 3d scene shape from a single
  monocular image.
\newblock {\em IEEE Transactions on Pattern Analysis and Machine Intelligence},
  2022.

\bibitem{Wei2021CVPR}
Wei Yin, Jianming Zhang, Oliver Wang, Simon Niklaus, Long Mai, Simon Chen, and
  Chunhua Shen.
\newblock Learning to recover 3d scene shape from a single image.
\newblock In {\em Proc. IEEE Conf. Comp. Vis. Patt. Recogn. (CVPR)}, 2021.

\bibitem{yu2021pixelnerf}
Alex Yu, Vickie Ye, Matthew Tancik, and Angjoo Kanazawa.
\newblock {pixelNeRF}: Neural radiance fields from one or few images.
\newblock In {\em CVPR}, 2021.

\bibitem{yu2022monosdf}
Zehao Yu, Songyou Peng, Michael Niemeyer, Torsten Sattler, and Andreas Geiger.
\newblock Monosdf: Exploring monocular geometric cues for neural implicit
  surface reconstruction.
\newblock {\em arXiv preprint arXiv:2206.00665}, 2022.

\bibitem{zamir2018taskonomy}
Amir~R Zamir, Alexander Sax, William Shen, Leonidas~J Guibas, Jitendra Malik,
  and Silvio Savarese.
\newblock Taskonomy: Disentangling task transfer learning.
\newblock In {\em Proceedings of the IEEE conference on computer vision and
  pattern recognition}, pages 3712--3722, 2018.

\bibitem{zeng20163dmatch}
Andy Zeng, Shuran Song, Matthias Nie{\ss}ner, Matthew Fisher, Jianxiong Xiao,
  and Thomas Funkhouser.
\newblock 3dmatch: Learning local geometric descriptors from rgb-d
  reconstructions.
\newblock In {\em CVPR}, 2017.

\bibitem{zhang2018unreasonable}
Richard Zhang, Phillip Isola, Alexei~A Efros, Eli Shechtman, and Oliver Wang.
\newblock The unreasonable effectiveness of deep features as a perceptual
  metric.
\newblock In {\em Proceedings of the IEEE conference on computer vision and
  pattern recognition}, pages 586--595, 2018.

\bibitem{zhang2012microsoft}
Zhengyou Zhang.
\newblock Microsoft kinect sensor and its effect.
\newblock {\em IEEE multimedia}, 19(2):4--10, 2012.

\bibitem{zhou2017unsupervised}
Tinghui Zhou, Matthew Brown, Noah Snavely, and David~G Lowe.
\newblock Unsupervised learning of depth and ego-motion from video.
\newblock In {\em Proceedings of the IEEE conference on computer vision and
  pattern recognition}, pages 1851--1858, 2017.

\end{thebibliography}
}

\renewcommand{\thesection}{A}
\renewcommand{\thetable}{A\arabic{table}}
\renewcommand{\thefigure}{A\arabic{figure}}

\clearpage
\section{Additional Results}
\subsection{Experiments on Tanks and Temples}
We conduct further experiments to test the robustness of SCADE. We evaluate on three scenes from the Tanks and Temples~\cite{Knapitsch2017} dataset, namely three large indoor rooms - Church, Courtroom and Auditorium scenes. The training set consists of 21, 26 and 21 sparse views for the Church, Courtroom and Auditorium scenes respectively, and the test set consists of 8 sparse views, so the amount of data is similar to that used in prior work~\cite{roessle2022dense}. We also followed similar data preprocessing steps as prior work~\cite{roessle2022dense} and ran SfM~\cite{schoenberger2016sfm} on all images to obtain camera poses for training.

As shown in Table~\ref{tbl:tnt_results}, \textbf{SCADE} trained with the same out-of-domain prior that we used for the other datasets (which was trained on Taskonomy~\cite{zamir2018taskonomy}) outperforms the baselines on the Tanks and Temples dataset as well. Moreover, Figure~\ref{fig:tanks_and_temples_quali} shows qualitative results. As shown, \textbf{SCADE} is able to recover objects better than the baselines such as the table in the Church, the group of chairs in the Courtroom (second column), and the rows of seats in the Auditorium (clearer in the side-view seats on the second column). Moreover, results also show that \textbf{SCADE} avoids clouds of dust such as the lights on the wall of the Church (second column), painting on the wall of the Courtroom (last column) and details on the repetitive seats of the auditorium.


\begin{table}
\centering
\setlength\tabcolsep{2pt}
\begin{tabularx}{\linewidth}{c|C|C|C}
    \toprule
    &  PSNR $\uparrow$& SSIM $\uparrow$& LPIPS $\downarrow$\\
    \midrule
    Vanilla NeRF~\cite{mildenhall2020nerf} & 17.19 & 0.559 & 0.457 \\
    DDP~\cite{roessle2022dense} & 19.18 & 0.651 & 0.361 \\
    \midrule
    SCADE & \textbf{20.13} & \textbf{0.662} & \textbf{0.358}\\
    \bottomrule
\end{tabularx}
\caption{\textbf{Quantitative results for the Tanks and Temples~\cite{Knapitsch2017} dataset}. }
\label{tbl:tnt_results}
\end{table}

\subsection{Video Demo}
Our project page \href{https://scade-spacecarving-nerfs.github.io}{scade-spacecarving-nerfs.github.io} shows a video trajectory from each of the three datasets in our experiments.
As shown on the Scannet scene, \textbf{SCADE} is able to better recover and crisp up the black chair; on the In-the-Wild data scene, the room and objects behind the glass wall are better captured, and finally, on the Tanks and Temples scene, the table on the center is more solid and also clear up dust on the wall and aisle of the church.

\begin{table}
\centering
\setlength\tabcolsep{2pt}
\begin{tabularx}{\linewidth}{c|C|C|C}
    \toprule
    &  PSNR $\uparrow$& SSIM $\uparrow$& LPIPS $\downarrow$\\
    \midrule
    $M=1\text{  }$ & 21.22 & 0.714 & 0.318 \\
    $M=5\text{  }$ & 21.05 & 0.722 & 0.304 \\
    $M=10\text{  }$ & 21.41 & 0.729 & 0.296\\
    $M=20\text{  }$ & 21.54 & \textbf{0.732} & \textbf{0.292}\\
    $M=40\text{  }$ & \textbf{21.61} & 0.729 & 0.293\\
    $M=80\text{  }$ & 21.58 & 0.729 & 0.295\\
    \bottomrule
\end{tabularx}
\caption{\textbf{Ablation on $M$}. We ablate on the number of depth estimates used from our ambiguity-aware prior to train SCADE.}
\label{tbl:m_ablation}
\end{table}

\subsection{Ablation on Number of Hypotheses $M$}
We further ablate on the number of estimates $M$ from our ambiguity-aware prior for training SCADE. Table~\ref{tbl:m_ablation} shows the quantative results for the Scannet dataset. As shown, the results in general improve as we increase the number of depth estimates as this gives us a better approximation of the depth distribution. We observe that the improvement is marginal as we increase the number of depth estimates beyond $20$. Hence, we use $M=20$ in our experiments.

\section{Implementation Details}
We train our ambiguity-aware prior with a batch size of 16 and use a learning rate of 0.001 for the base model, and 0.0001 for the MLP layers before AdaIn~\cite{huang2017arbitrary}. We use a latent code dimension of $32$, and we follow the standard cIMLE training strategy~\cite{Li2020MultimodalIS} and sample 20 latent codes per image and resample every 10 epochs. 

To train our NeRF model, we use a batch size of 1024 rays for 500k iterations. We use the Adam optimizer~\cite{kingma2014adam} with a learning rate of $5e^{-4}$ decaying to $5e^{-5}$ in the last 100k iterations. We use the same architecture as the original NeRF~\cite{mildenhall2020nerf}, which samples 64 points the coarse network and an additional 128 points for the fine network. Because the depth estimates are in relative scale, we directly optimize for the scale and shift for each input image. We directly optimize a 2-dim variable for \emph{each input image} that scales and shifts depth hypotheses. These variables are jointly optimized with the NeRF for the first 400k iters (learning rate of 1e-6), and are then kept frozen for the last 100k iters. 

\section{Ambiguity-Aware Depth Estimates}
\begin{figure}[t]
    \centering
    \includegraphics[width=\linewidth]{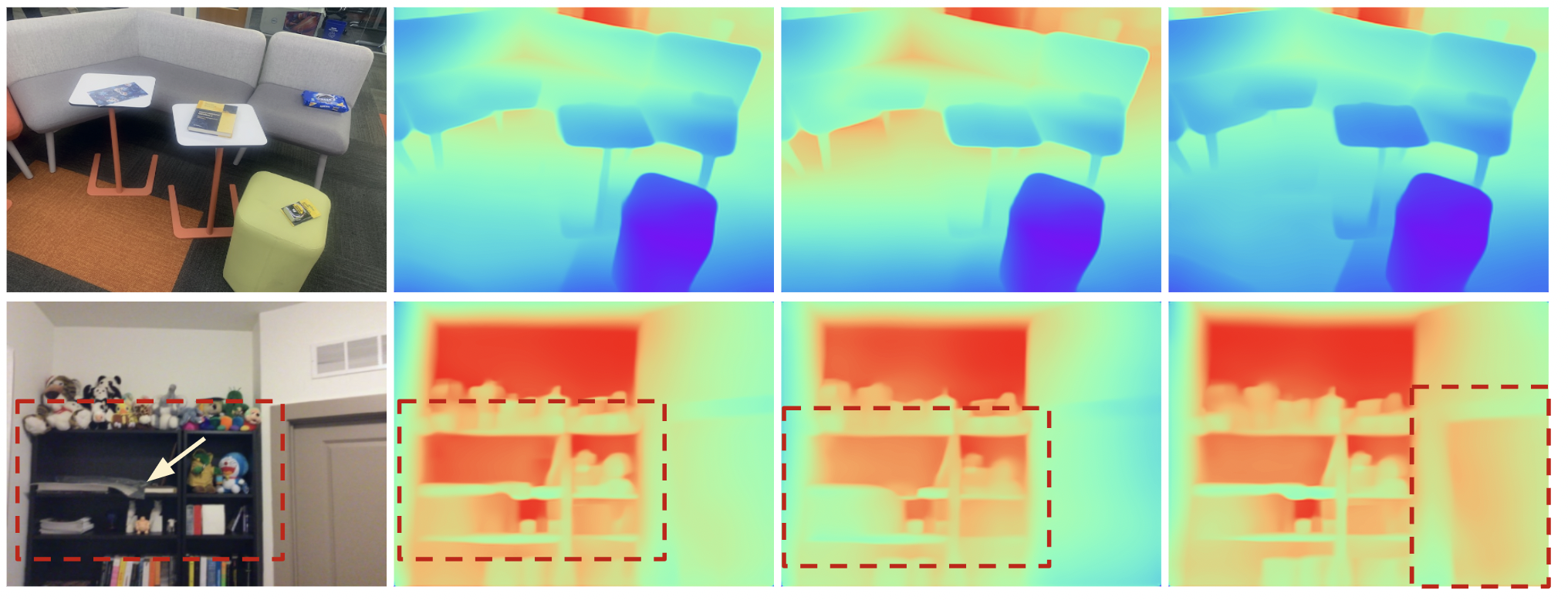}
    
    \caption{{\textbf{Depth Estimate Samples}. Here we show two examples of train images from scenes used in our experiments that show the ambiguity in (top) different degrees of convexity and (b) albedo vs shading ambiguity on the door frame and possible existence of an object inside the bookshelf. Please see Fig.~\ref{fig:depth_hypothesis_many} for more examples.}}
    \label{fig:depth_hyp}
    \vspace{-1.3em}
\end{figure}

We show some samples from the depth distribution of our multimodal prior from scenes in ScanNet and our in-the-wild data in Figures~\ref{fig:teaser_supp} and \ref{fig:depth_hyp}. We see that depth from a single input image is ambiguous as captured by our multimodal prior. In Figure~\ref{fig:teaser_supp}, we are able to capture the multimodality in ray termination distance caused by non-opaque glass surfaces. In Figure~\ref{fig:depth_hyp} (top row) we are able to capture different degrees of concavity of the sofa as well as the ambiguity in the depth of the far wall and floor. In Figure~\ref{fig:depth_hyp} (bottom row), we have ambiguity on the presence of a dark colored object on the boxed shelf and the depth of the door w.r.t. the door frame due to albedo vs shading ambiguity. Figure~\ref{fig:depth_hypothesis_many} shows more samples of our multimodal depth estimates on train images for the different scenes used in our experiments. Note that the depth map visualizations are normalized per image, i.e. the colors represent per image relative depth. 



\subsection{Adaptation of the cIMLE~\cite{Li2020MultimodalIS} proof for our Ambiguity-Aware Prior}
\theoremstyle{plain}
\newtheorem{thm}{\protect\theoremname}
\theoremstyle{plain}
\newtheorem{lem}[thm]{\protect\lemmaname}
\providecommand{\lemmaname}{Lemma}
\providecommand{\theoremname}{Theorem}

Here we show an adaptation of the proof provided in IMLE~\cite{li2018implicit} in the context of learning our ambiguity-aware depth estimates. Recall that we are given a set of input images $\{I_1, I_2, ..., I_n\}$ each with a corresponding ground truth depth map $D_1, D_2, ..., D_n$. As we know that monocular depth estimation is inherently ambiguous, we desire to learn a \emph{multimodal distribution of depth estimates} conditioned on an input image given only one ground truth lable (\ie depth map). 

Thus, we want to learn the network parameters $\phi$ for conditional distribution $G$ such that $G_\phi(I, z)$ models the distribution of depth estimates for a given input image $I\in \{I_1, I_2, ..., I_n\}$, where $z\sim \mathcal{N}(0, \mathbf{I})$ are latent codes sampled from a normal distribution.

Unlike GAN's~\cite{karras2019style} that optimize that each sample is similar to a ground truth data point, cIMLE~\cite{Li2020MultimodalIS} prevents mode collapse by instead enforcing that all ground truth data points are explained by at least one generated sample. Hence, in order to learn $\phi$, the objective function that we want to optimize is maximizing the sum of the likelihoods at the training examples. 

Consider our ambiguity-aware prior $G_{\phi, i}$, an implicit generative model, the likelihood induced by this model $P_{\phi, i}$ is computationally intractable to compute as it cannot be expressed in closed form. In this proof, we show that maximizing this likelihood is equivalent to optimizing a sample-based objective, making it tractable. We first i) rewrite the desired objective function (Sec~\ref{sec:objective_function}), ii) show its equivalence to the loss function used in training (Sec~\ref{sec:equivalence_loss_function}), then finally iii) show its equivalence to maximizing the sum of the likelihoods at the training examples, \ie the single ground truth depth maps associated with each image (Sec~\ref{sec:imle_convergence_proof}).

\subsubsection{Objective function}
\label{sec:objective_function}
Let's consider the following objective function:

\begin{equation}
\begin{split}
\max_{\phi}\mathcal{L}_{\{\delta_{i}\}_{i}}(\phi):=\max_{\phi}\mathbb{E}_{\{y_{i,j} \sim P_{\phi,i}\}_{i,j}}\Big[\frac{1}{n}\sum_{i=1}^{n}\frac{1}{w_{i}}\Big(\delta_{i}-\\
\frac{1}{M}\sum_{j=1}^{M}\Phi_{\delta_{i}}(d(y_{i, j},D_i)\Big)\Big]
\end{split}
\end{equation}

\noindent $y_{i, j}$ is a sampled depth estimate, \ie $y_{i, j}= G(I_i, z_j)$, $M$ is the number of samples drawn, $d(y_{i, j},D_i)$ denotes the distance between the sampled depth estimate and the given ground truth depth map for image $I_i$. 

$\delta_i >0$ denotes the threshold of the radius of the largest neighborhood that we are interested in, \ie the neighborhood around the ground truth data points (depth maps) where we are interested in having generated depth estimate samples at. This radius is dependent on the training example (hence the subscript $i$) as some examples may have a larger/smaller neighborhood of interest than others. $\Phi_{\delta_i}$ is a function we choose, which we will define below, and $w_i$ is a weighting factor that is also dependent on the training example.

Note that here, we reuse $\mathcal{L}$ to denote the likelihood, and it should not be confused with the notation for the loss functions in the main paper.\\

\noindent\textbf{Choosing $\Phi_\delta$.}\\

For $\delta>0$ (threshold on the radius), $\Phi_\delta$ is chosen as
\begin{equation}
\begin{split}
\Phi_{\delta}(t)=
\begin{cases}
t & 0 \leq t\leq\delta\\
\delta & t>\delta
\end{cases},
\end{split}
\end{equation}

\noindent Intuitively, this assigns the random variable t, which is our case will be the the distance $d(\cdot)$ between the ground truth depth and a sampled depth estimate, to a value depending on the radius threshold $\delta$. Any distance larger than $\delta$, \ie is the sampled estimate is far enough, is set to $\delta$.

Consequently, the chosen antiderivative is shown below
\[
\Phi_{\delta}'(t)=\begin{cases}
1 & 0 \leq t\leq\delta\\
0 & t>\delta
\end{cases}
\]

\noindent\textbf{Relating to model distribution $P_{\phi, i}$}\\
Three lemmas written below tie together the likelihood $\mathcal{L}_{\delta}$ to the objective function. 

\begin{lem}
Let $Y$ be a non-negative random variable and $f$ be a continuous
function on $[0,\infty)$, and $f'$ to denote a function
whose antiderivative is $f$.

\[
\mathbb{E}\left[f(Y)\right]=f(0)+\int_{0}^{\infty}f'(t)\mathrm{Pr}(Y\geq t)dt
\]
\end{lem}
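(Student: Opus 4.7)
The plan is to prove this identity via the fundamental theorem of calculus together with a Fubini/Tonelli swap, which is the standard ``layer-cake'' argument for non-negative random variables applied to a general smooth function.

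First I would rewrite $f(Y)$ using the fundamental theorem of calculus. Since $f'$ is chosen so that $f$ is its antiderivative, we have
\[
f(Y) \;=\; f(0) + \int_{0}^{Y} f'(t)\,dt.
\]
Because $Y \geq 0$, the upper limit can be absorbed into an indicator, yielding
\[
f(Y) \;=\; f(0) + \int_{0}^{\infty} f'(t)\,\mathbf{1}\{t \leq Y\}\,dt.
\]

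Next I would take expectations on both sides. The constant $f(0)$ passes through trivially, leaving
\[
\mathbb{E}[f(Y)] \;=\; f(0) + \mathbb{E}\!\left[\int_{0}^{\infty} f'(t)\,\mathbf{1}\{t \leq Y\}\,dt\right].
\]
The key step is swapping expectation and integration via Fubini/Tonelli. Under the mild integrability hypothesis implicit in the statement (e.g.\ $f' \geq 0$, which applies to the later choice $\Phi_{\delta}'$, or $\mathbb{E}[|f(Y)|] < \infty$), Tonelli applies and we obtain
\[
\mathbb{E}[f(Y)] \;=\; f(0) + \int_{0}^{\infty} f'(t)\,\mathbb{E}[\mathbf{1}\{t \leq Y\}]\,dt.
\]
Finally, $\mathbb{E}[\mathbf{1}\{t \leq Y\}] = \Pr(Y \geq t)$ by definition, giving the claim.

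The only real obstacle is the justification of the Fubini swap: one needs either non-negativity of $f'$ (so Tonelli applies without further assumption) or an integrability hypothesis on $f(Y)$. In the intended application to $\Phi_{\delta}'$, the integrand is bounded and supported on $[0,\delta]$, so integrability is automatic; I would therefore state the lemma as implicitly requiring one of these two conditions and carry out the swap under Tonelli. The rest of the argument is mechanical.
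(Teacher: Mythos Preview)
Your proposal is correct and follows essentially the same approach as the paper: both apply the fundamental theorem of calculus to write $f(y)-f(0)=\int_0^y f'(t)\,dt$ and then interchange the order of integration via Fubini/Tonelli. The paper starts from the right-hand side, writes $\Pr(Y\geq t)=\int_t^\infty p(y)\,dy$ using a density, and swaps the $t$- and $y$-integrals explicitly, whereas you start from $f(Y)$ and use an indicator $\mathbf{1}\{t\leq Y\}$ inside the expectation; this is the same Fubini swap in slightly different notation, with the minor advantage that your version does not tacitly assume $Y$ has a density.
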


\begin{proof}

\begin{align*}
&f(0)+\int_{0}^{\infty}f'(t)\mathrm{Pr}(Y\geq t)dt \\ &=f(0)+\int_{0}^{\infty}\int_{t}^{\infty}f'(t)p(y)dydt\\
 & =f(0)+\int_{\{y\geq t,t\geq0\}}f'(t)p(y)d\left(\begin{array}{c}
y\\
t
\end{array}\right)\\
 & =f(0)+\int_{0}^{\infty}\int_{0}^{y}f'(t)p(y)dtdy\\
 & =f(0)+\int_{0}^{\infty}\left(\int_{0}^{y}f'(t)dt\right)p(y)dy\\
 & =f(0)+\int_{0}^{\infty}\left(f(y)-f(0)\right)p(y)dy\quad\text{(2nd FTC)}\\
 & =f(0)+\int_{0}^{\infty}f(y)p(y)dy-\int_{0}^{\infty}f(0)p(y)dy\\
 & =f(0)+\mathbb{E}\left[f(Y)\right]-f(0)\\
 & =\mathbb{E}\left[f(Y)\right]
\end{align*}
\end{proof}

\begin{lem}
With the chosen $\Phi_{\delta}(\cdot)$ and $\Phi_{\delta}'(\cdot)$ shown previously,
$\mathbb{E}_{\{y_{i,j} \sim P_{\phi,i}\}_{i,j}}\left[\Phi_{\delta_{i}}(d(y_{i, j},D_i))\right]=\delta_{i}-\int_{0}^{\delta_{i}}\mathrm{Pr}(d(y_{i, j},D_i)<t)dt$.
\end{lem}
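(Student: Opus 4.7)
The plan is to invoke Lemma 1 directly with $f = \Phi_{\delta_i}$ and the non-negative random variable $Y = d(y_{i,j}, D_i)$, then unpack the explicit form of $\Phi'_{\delta_i}$ to collapse the expression to the claimed identity. Since only $y_{i,j}$ appears in the integrand, the expectation over $\{y_{i,j} \sim P_{\phi,i}\}_{i,j}$ reduces to a single expectation over $y_{i,j} \sim P_{\phi,i}$, so Lemma 1 is immediately applicable.

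First, I would observe that $\Phi_{\delta_i}(0) = 0$, so the $f(0)$ boundary term in Lemma 1 vanishes, leaving
\begin{equation*}
\mathbb{E}\left[\Phi_{\delta_i}(d(y_{i,j}, D_i))\right] = \int_0^\infty \Phi'_{\delta_i}(t)\, \mathrm{Pr}(d(y_{i,j}, D_i) \geq t)\, dt.
\end{equation*}
Next, using the stated formula $\Phi'_{\delta_i}(t) = 1$ for $t \in [0, \delta_i]$ and $0$ for $t > \delta_i$, the integrand is supported on $[0, \delta_i]$, which truncates the upper limit of integration and yields $\int_0^{\delta_i} \mathrm{Pr}(d(y_{i,j}, D_i) \geq t)\, dt$.

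The final step is elementary: split $\mathrm{Pr}(d(y_{i,j}, D_i) \geq t) = 1 - \mathrm{Pr}(d(y_{i,j}, D_i) < t)$ to obtain
\begin{equation*}
\int_0^{\delta_i} 1\, dt \;-\; \int_0^{\delta_i} \mathrm{Pr}(d(y_{i,j}, D_i) < t)\, dt \;=\; \delta_i - \int_0^{\delta_i} \mathrm{Pr}(d(y_{i,j}, D_i) < t)\, dt,
\end{equation*}
which is exactly the claimed right-hand side.

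There is no serious obstacle here; the statement follows from a one-line application of Lemma 1 combined with a trivial complement step. The only subtlety worth noting is that $\Phi'_{\delta_i}$ is discontinuous at $t = \delta_i$ and not literally a classical derivative of $\Phi_{\delta_i}$ there, but since this is a single point it contributes measure zero to the Lebesgue integral, so the choice between $\mathrm{Pr}(Y \geq t)$ and $\mathrm{Pr}(Y > t)$ and the inclusion or exclusion of the endpoint $\delta_i$ does not affect the value. This mirrors the way Lemma 1 was applied: the ``antiderivative'' in its statement is understood in the almost-everywhere sense, which is consistent with the second fundamental theorem of calculus invoked in its proof.
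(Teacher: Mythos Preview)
Your proof is correct and follows essentially the same approach as the paper: apply Lemma 1 with $f=\Phi_{\delta_i}$ and $Y=d(y_{i,j},D_i)$, use $\Phi_{\delta_i}(0)=0$, truncate the integral via the support of $\Phi'_{\delta_i}$, and then take the complement. Your additional remarks about the expectation reducing to a single $y_{i,j}$ and the measure-zero discontinuity at $t=\delta_i$ are accurate clarifications but do not change the route.
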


\begin{proof}
By definition, $\Phi_{\delta_{i}}(0)=0$.

\begin{align*}
& \mathbb{E}_{\{y_{i,j} \sim P_{\phi,i}\}_{i,j}}\left[\Phi_{\delta_{i}}(d(y_{i, j},D_i)\right] \\
&=\Phi_{\delta_{i}}(0) + \int_{0}^{\infty}\Phi_{\delta_{i}}'(t)\mathrm{Pr}(d(y_{i, j},D_i)\geq t)dt\quad\\
&\text{(From Lemma 1)}\\
& =\int_{0}^{\delta_{i}} 1 \cdot \mathrm{Pr}(d(y_{i, j},D_i)\geq t)dt \\
&+ \int_{\delta_{i}}^{\infty} 0 \cdot \mathrm{Pr}(d(y_{i, j},D_i)\geq t)dt\\
 & =\int_{0}^{\delta_{i}}\mathrm{Pr}(d(y_{i, j},D_i)\geq t)dt\\
 & =\int_{0}^{\delta_{i}}\left(1-\mathrm{Pr}(d(y_{i, j},D_i)<t)\right)dt\\
 & =\delta_{i}-\int_{0}^{\delta_{i}}\mathrm{Pr}(d(y_{i, j},D_i)<t)dt
\end{align*}
\end{proof}

\begin{lem}
The likelihood above is equivalent to  $\mathcal{L}_{\{\delta_{i}\}_{i}}(\phi)=\frac{1}{n}\sum_{i=1}^{n}\frac{1}{Mw_{i}}\sum_{j=1}^{M}\int_{0}^{\delta_{i}}\mathrm{Pr}(d(y_{i, j},D_i)<t)dt$.
\end{lem}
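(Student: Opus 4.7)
The plan is to obtain the claimed formula by direct substitution of Lemma 2 into the definition of $\mathcal{L}_{\{\delta_i\}_i}(\phi)$, using linearity of expectation. There is no real obstacle here; the only thing to verify is that the $\delta_i$ terms inside and outside the expectation cancel in the right way.

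Concretely, I would first push the expectation inside the sums in the original objective using linearity of expectation, which is valid because each $y_{i,j}$ is sampled independently from $P_{\phi,i}$ and the quantities $\delta_i$, $n$, $M$, $w_i$ are deterministic. This yields
\begin{equation*}
\mathcal{L}_{\{\delta_i\}_i}(\phi) = \frac{1}{n}\sum_{i=1}^n \frac{1}{w_i}\left(\delta_i - \frac{1}{M}\sum_{j=1}^M \mathbb{E}_{y_{i,j}\sim P_{\phi,i}}\!\left[\Phi_{\delta_i}(d(y_{i,j},D_i))\right]\right).
\end{equation*}

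Next, I would apply Lemma 2 to each expectation, replacing $\mathbb{E}\!\left[\Phi_{\delta_i}(d(y_{i,j},D_i))\right]$ with $\delta_i - \int_0^{\delta_i}\mathrm{Pr}(d(y_{i,j},D_i)<t)\,dt$. Substituting gives
\begin{equation*}
\mathcal{L}_{\{\delta_i\}_i}(\phi) = \frac{1}{n}\sum_{i=1}^n \frac{1}{w_i}\left(\delta_i - \frac{1}{M}\sum_{j=1}^M\left(\delta_i - \int_0^{\delta_i}\mathrm{Pr}(d(y_{i,j},D_i)<t)\,dt\right)\right).
\end{equation*}

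The last step is purely algebraic: inside the parentheses, the $\frac{1}{M}\sum_{j=1}^M \delta_i = \delta_i$ exactly cancels the leading $\delta_i$, leaving
\begin{equation*}
\mathcal{L}_{\{\delta_i\}_i}(\phi) = \frac{1}{n}\sum_{i=1}^n \frac{1}{Mw_i}\sum_{j=1}^M \int_0^{\delta_i}\mathrm{Pr}(d(y_{i,j},D_i)<t)\,dt,
\end{equation*}
which is the claimed identity. The only subtlety worth flagging is that Lemma 2 requires $d(y_{i,j},D_i)\geq 0$ (so the non-negativity hypothesis of Lemma 1 applies) and continuity of $\Phi_{\delta_i}$ on $[0,\infty)$; both are immediate from the definition of a distance and from the piecewise-linear form of $\Phi_{\delta_i}$ chosen earlier.
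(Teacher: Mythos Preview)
Your proposal is correct and follows essentially the same approach as the paper's proof: push the expectation inside by linearity, apply Lemma~2 to each term, and cancel the $\delta_i$ terms algebraically. The paper does exactly these three steps in the same order.
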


\begin{proof}

\begin{align*}
&\mathcal{L}_{\{\delta_{i}\}_{i}}(\phi) \\
& =\mathbb{E}_{\{y_{i,j} \sim P_{\phi,i}\}_{i,j}}\left[\frac{1}{n}\sum_{i=1}^{n}\frac{1}{w_{i}}\left(\delta_{i}-\frac{1}{M}\sum_{j=1}^{M}\Phi_{\delta_{i}}(d(y_{i, j},D_i))\right)\right]\\
 & =\frac{1}{n}\sum_{i=1}^{n}\frac{1}{w_{i}}\left(\delta_{i}-\frac{1}{M}\sum_{j=1}^{M}\mathbb{E}_{\{y_{i,j} \sim P_{\phi,i}\}_{i,j}}\left[\Phi_{\delta_{i}}(d(y_{i, j},D_i)\right]\right)\\
 & =\frac{1}{n}\sum_{i=1}^{n}\frac{1}{w_{i}}\\
 &\left(\delta_{i}-\frac{1}{M}\sum_{j=1}^{M}\left(\delta_{i}-\int_{0}^{\delta_{i}}\mathrm{Pr}(d(y_{i, j},D_i)<t)dt\right)\right)\quad\\
 &\text{(From Lemma 2)}\\
 & =\frac{1}{n}\sum_{i=1}^{n}\frac{1}{Mw_{i}}\sum_{j=1}^{M}\int_{0}^{\delta_{i}}\mathrm{Pr}(d(y_{i, j},D_i)<t)dt
\end{align*}
\end{proof}

\subsubsection{Equivalence to loss function for training}
\label{sec:equivalence_loss_function}
Here shows the equivalence of a tractable sample-based loss function used for training. \\

\noindent\textbf{Radius Threshold $\delta_i$}
Lemma 3 shows that the likelihood computes the probability the model $P_phi$ assigns to the neighborhood of the training sample, which is controlled by the radius threshold $\delta_i$. To maximize the likelihood, a small neighborhood is desired, hence a small value of $\delta_i$ is desirable. However, if $\delta_i$ is ``too small", then by the chosen $\Phi_{\delta_i}$, if $d(y_{i, j},D_i)>\delta_i$, for all $j$, then $d(y_{i, j},D_i)=\delta_i \forall j$, which leads to $\frac{1}{n}\sum_{i=1}^{n}\frac{1}{w_{i}}(\delta_{i}-\frac{1}{M}\sum_{j=1}^{M}\Phi_{\delta_{i}}(d(y_{i, j},D_i) =0$. This leads to having no gradients w.r.t. to $\phi$ since it is constant, which does not allow for network training. Thus the smallest $\delta_i$ that can have such that the expression's value is not constant and allows for gradients is $\min_{j\in[M]}d(y_{i, j},D_i)$. The likelihood objective then becomes:

\begin{align*}
& \mathcal{L}_{\{\delta_{i}\}_{i}}(\theta) =\mathbb{E}_{\{y_{i,j} \sim P_{\phi,i}\}_{i,j}}\\
&\left[\frac{1}{n}\sum_{i=1}^{n}\frac{1}{w_{i}}\left(\delta_{i}-\frac{M-1}{M}\delta_{i}-\frac{1}{M}\min_{j\in[M]}d(y_{i, j}, D_i)\right)\right]\\
 & =\mathbb{E}_{\{y_{i,j} \sim P_{\phi,i}\}_{i,j}}\left[\frac{1}{n}\sum_{i=1}^{n}\frac{1}{w_{i}}\left(\frac{1}{M}\delta_{i}-\frac{1}{M}\min_{j\in[M]}d(y_{i, j}, D_i)\right)\right]\\
 & =\mathbb{E}_{\{y_{i,j} \sim P_{\phi,i}\}_{i,j}}\left[\frac{1}{nM}\sum_{i=1}^{n}\frac{1}{w_{i}}\left(\delta_{i}-\min_{j\in[m]}d(y_{i, j}, D_i)\right)\right]
\end{align*}

The sample-based loss function then becomes equivalent to the objective of maximizing the likelihood as follows:

\begin{align*}
&\arg\max_{\phi}\mathcal{L}_{\{\phi_{i}\}_{i}}(\phi) \\
& =\arg\max_{\phi}\mathbb{E}_{\{y_{i,j} \sim P_{\phi,i}\}_{i,j}}\\
&\left[\frac{1}{nM}\sum_{i=1}^{n}\frac{1}{w_{i}}\left(\delta_{i}-\min_{j\in[M]}d(y_{i, j}, D_i)\right)\right]\\
& =\arg\max_{\phi}\mathbb{E}_{\{y_{i,j} \sim P_{\phi,i}\}_{i,j}}\left[\sum_{i=1}^{n}\frac{\delta_i}{w_{i}} - \frac{1}{w_{i}}\min_{j\in[M]}d(y_{i, j}, D_i)\right]\\
& =\arg\max_{\phi}\mathbb{E}_{\{y_{i,j} \sim P_{\phi,i}\}_{i,j}}\left[-\sum_{i=1}^{n} \frac{1}{w_{i}}\min_{j\in[M]}d(y_{i, j}, D_i)\right]\\
 & =\arg\min_{\phi}\mathbb{E}_{\{y_{i,j} \sim P_{\phi,i}\}_{i,j}}\left[\sum_{i=1}^{n}\frac{1}{w_{i}}\min_{j\in[M]}d(y_{i, j}, D_i)\right]
\end{align*}

\noindent which is the sample-based loss function, \ie taking the minimum loss for the set of drawn samples. In our case, $w_i=0 \forall i$, and for each training data point, we sample $M=20$ estimates by drawing $z_j\sim \mathcal{N}(0, \mathbf{I})$, and taking the minimum loss w.r.t. to the corresponding single ground truth depth map $D_i$ for the training data point. This allows us to learn multimodal depth distributions to capture the inherent ambiguities in monocular depth estimation.

\subsubsection{Equivalence to maximizing the sum of the likelihoods.}
\label{sec:imle_convergence_proof}
For completeness, here shows the equivalence of the objective function to maximizing the sum of the likelihood as proven in IMLE~\cite{li2018implicit}. The learning of $\phi$ involves solving a sequence of optimization problems at current values for $\delta_i$, and as optimization progresses later into the sequence, $\delta_i$ becomes smaller and smaller and eventually converges to the maximum likelihood.

\begin{lem}
$\lim_{\{\delta_{i}\to0^{+}\}_{i}}\mathcal{L}_{\{\delta_{i}\}_{i}}(\phi)=\frac{1}{n}\sum_{i=1}^{n}p_{\delta}(D_{i})$.
\end{lem}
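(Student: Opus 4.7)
The plan is to start from the closed-form representation established in Lemma~3, namely
\begin{equation*}
\mathcal{L}_{\{\delta_{i}\}_{i}}(\phi) = \frac{1}{n}\sum_{i=1}^{n}\frac{1}{Mw_{i}}\sum_{j=1}^{M}\int_{0}^{\delta_{i}}\Pr(d(y_{i,j},D_i)<t)\,dt,
\end{equation*}
and to observe that, since the samples $y_{i,1},\dots,y_{i,M}$ are i.i.d.\ draws from $P_{\phi,i}$, the probability $\Pr(d(y_{i,j},D_i)<t)$ does not depend on $j$. Thus the inner sum collapses, the factor of $M$ cancels, and each training point contributes $\tfrac{1}{w_i}\int_0^{\delta_i}\Pr(d(Y_i,D_i)<t)\,dt$ with $Y_i\sim P_{\phi,i}$.

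The next step is to identify the integrand with a local mass of the model distribution. Writing $B_t(D_i)$ for the metric ball of radius $t$ around $D_i$ in depth-map space, we have $\Pr(d(Y_i,D_i)<t) = \int_{B_t(D_i)} p_{\phi,i}(y)\,dy$. When $D_i$ is a Lebesgue point of $p_{\phi,i}$ (in particular whenever $p_{\phi,i}$ is continuous at $D_i$, which is a mild regularity condition on the implicit generator), this integral equals $p_{\phi,i}(D_i)\cdot\mathrm{Vol}(B_t(D_i)) + o(t^{k})$ for small $t$, where $k$ is the intrinsic dimension. Integrating over $t\in[0,\delta_i]$ and choosing the normalization $w_i(\delta_i) = \int_0^{\delta_i}\mathrm{Vol}(B_t(D_i))\,dt$ (the canonical IMLE device that keeps the ratio finite), I would apply the Lebesgue differentiation theorem, or equivalently L'H\^opital's rule on the two vanishing integrals, to obtain
\begin{equation*}
\lim_{\delta_i\to 0^+}\frac{1}{w_i(\delta_i)}\int_0^{\delta_i}\Pr(d(Y_i,D_i)<t)\,dt \;=\; p_{\phi,i}(D_i).
\end{equation*}

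Because the outer average over $i=1,\dots,n$ is a \emph{finite} sum, the limit can be pulled inside it, yielding
\begin{equation*}
\lim_{\{\delta_i\to 0^+\}_i}\mathcal{L}_{\{\delta_i\}_i}(\phi) \;=\; \frac{1}{n}\sum_{i=1}^{n} p_{\phi,i}(D_i),
\end{equation*}
which matches the statement with the natural reading that the symbol $p_\delta(D_i)$ denotes the model density $p_{\phi,i}$ evaluated at the ground-truth depth map $D_i$. Combined with the reduction in Section~\ref{sec:equivalence_loss_function}, this establishes that the sample-based minimum-distance loss used in practice is, in the small-neighborhood limit, maximizing the (sum of) per-example likelihoods under the implicit model.

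The main obstacle is the density-at-a-point step: both numerator and denominator vanish as $\delta_i\to 0^+$, so the claim is really an assertion about matching rates of convergence. This forces two things. First, one must justify that $D_i$ is a Lebesgue point of $p_{\phi,i}$, so the averaged ball integrals converge to the pointwise density; for an implicit generator with a smooth latent-to-output map and nondegenerate Jacobian, continuity of $p_{\phi,i}$ is the cleanest sufficient condition. Second, $w_i(\delta_i)$ must be chosen proportional to the integrated ball volume (essentially $\delta_i^{k+1}$ up to dimensional constants); otherwise the limit is either $0$ or $+\infty$. These two issues together are the only nontrivial content of the proof — the rest is bookkeeping from Lemma~3.
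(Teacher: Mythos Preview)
Your proposal is correct and follows essentially the same route as the paper: start from Lemma~3, rewrite $\Pr(d(y_{i,j},D_i)<t)$ as the ball mass $\int_{B_t(D_i)} p_{\phi,i}(\mathbf{y})\,d\mathbf{y}$, set $w_i$ equal to the integrated ball volume $\int_0^{\delta_i}\mathrm{Vol}(B_t(D_i))\,dt$, and pass to the limit via L'H\^opital. The only cosmetic difference is that the paper applies L'H\^opital twice (once on the outer $t$-integral, then again after passing to radial shells to reduce the ball average to a sphere average), whereas you invoke the Lebesgue differentiation theorem directly on the ball average; your version is slightly more honest about the regularity needed at $D_i$, but the mechanics are the same.
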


\begin{proof}

\begin{align*}
&\mathcal{L}_{\{\delta_{i}\}_{i}}(\phi) \\
& =\frac{1}{n}\sum_{i=1}^{n}\frac{1}{Mw_{i}}\sum_{j=1}^{M}\int_{0}^{\tau_{i}}\mathrm{Pr}(d(y_{i, j}, D_i)<t)dt\quad\\
&\text{(From Lemma 3)}\\
 & =\frac{1}{nM}\sum_{i=1}^{n}\sum_{j=1}^{M}\frac{1}{w_{i}}\int_{0}^{\tau_{i}}\int_{B_{t}(D_i)}p_{\phi, i}(\mathbf{y})d\mathbf{y}dt\\
 & =\frac{1}{nM}\sum_{i=1}^{n}\sum_{j=1}^{M}\frac{\int_{0}^{\delta_{i}}\int_{B_{t}(D_i)}P_{\phi, i}(\mathbf{y})d\mathbf{y}dt}{\int_{0}^{\delta_{i}}\int_{B_{t}(D_{i})}d\mathbf{y}dt}
\end{align*}

\begin{align*}
&\lim_{\{\delta_{i}\to0^{+}\}_{i}}\mathcal{L}_{\{\delta_{i}\}_{i}}(\phi) \\
& =\frac{1}{nM}\sum_{i=1}^{n}\left(\lim_{\delta_{i}\to0^{+}}\left(\sum_{j=1}^{M}\frac{\int_{0}^{\delta_{i}}\int_{B_{t}(D_{i})}p_{\phi, i}(\mathbf{y})d\mathbf{y}dt}{\int_{0}^{\delta_{i}}\int_{B_{t}(D_{i})}d\mathbf{y}dt}\right)\right)\\
 & =\frac{1}{nM}\sum_{i=1}^{n}\sum_{j=1}^{M}\left(\lim_{\delta_{i}\to0^{+}}\frac{\int_{0}^{\delta_{i}}\int_{B_{t}(D_{i})}p_{\phi, i}(\mathbf{y})d\mathbf{y}dt}{\int_{0}^{\delta_{i}}\int_{B_{t}(D_{i})}d\mathbf{y}dt}\right)\\
 & =\frac{1}{nM}\sum_{i=1}^{n}\sum_{j=1}^{M}\left(\lim_{\delta_{i}\to0^{+}}\frac{\int_{B_{\delta_{i}}(D_{i})}p_{\phi, i}(\mathbf{y})d\mathbf{y}}{\int_{B_{\delta_{i}}(D_{i})}d\mathbf{y}}\right)\quad\\
 &\text{(L'H\^{o}pital and 2nd FTC)}\\
 & =\frac{1}{nM}\sum_{i=1}^{n}\sum_{j=1}^{M}\left(\lim_{\delta_{i}\to0^{+}}\frac{\int_{0}^{\delta_{i}}\int_{\{\mathbf{y}\vert d(\mathbf{y},D_i)=r\}}p_{\phi, i}(\mathbf{y})d\mathbf{y}dr}{\int_{0}^{\delta_{i}}\int_{\{\mathbf{y}\vert d(\mathbf{y},D_i)=r\}}d\mathbf{y}dr}\right)\\
 & =\frac{1}{nM}\sum_{i=1}^{n}\sum_{j=1}^{M}\left(\lim_{\delta_{i}\to0^{+}}\frac{\int_{\{\mathbf{y}\vert d(\mathbf{y},D_i)=\delta_{i}\}}p_{\phi, i}(\mathbf{y})d\mathbf{y}}{\int_{\{\mathbf{y}\vert d(\mathbf{y},D_i)=\delta_{i}\}}d\mathbf{y}}\right)\quad\\
 &\text{(L'H\^{o}pital and 2nd FTC)}\\
 & =\frac{1}{nM}\sum_{i=1}^{n}\sum_{j=1}^{M}p_{\phi, i}(D_i)\\
 & =\frac{1}{n}\sum_{i=1}^{n}p_{\phi, i}(D_i)
\end{align*}
\end{proof}



\section{Training Images Samples}
We also show samples of train images from the three scenes in each of the three datasets in our experiments. Samples are shown in Figure~\ref{fig:train_images}.

\begin{figure*}
  \centering
    \includegraphics[width=\textwidth]{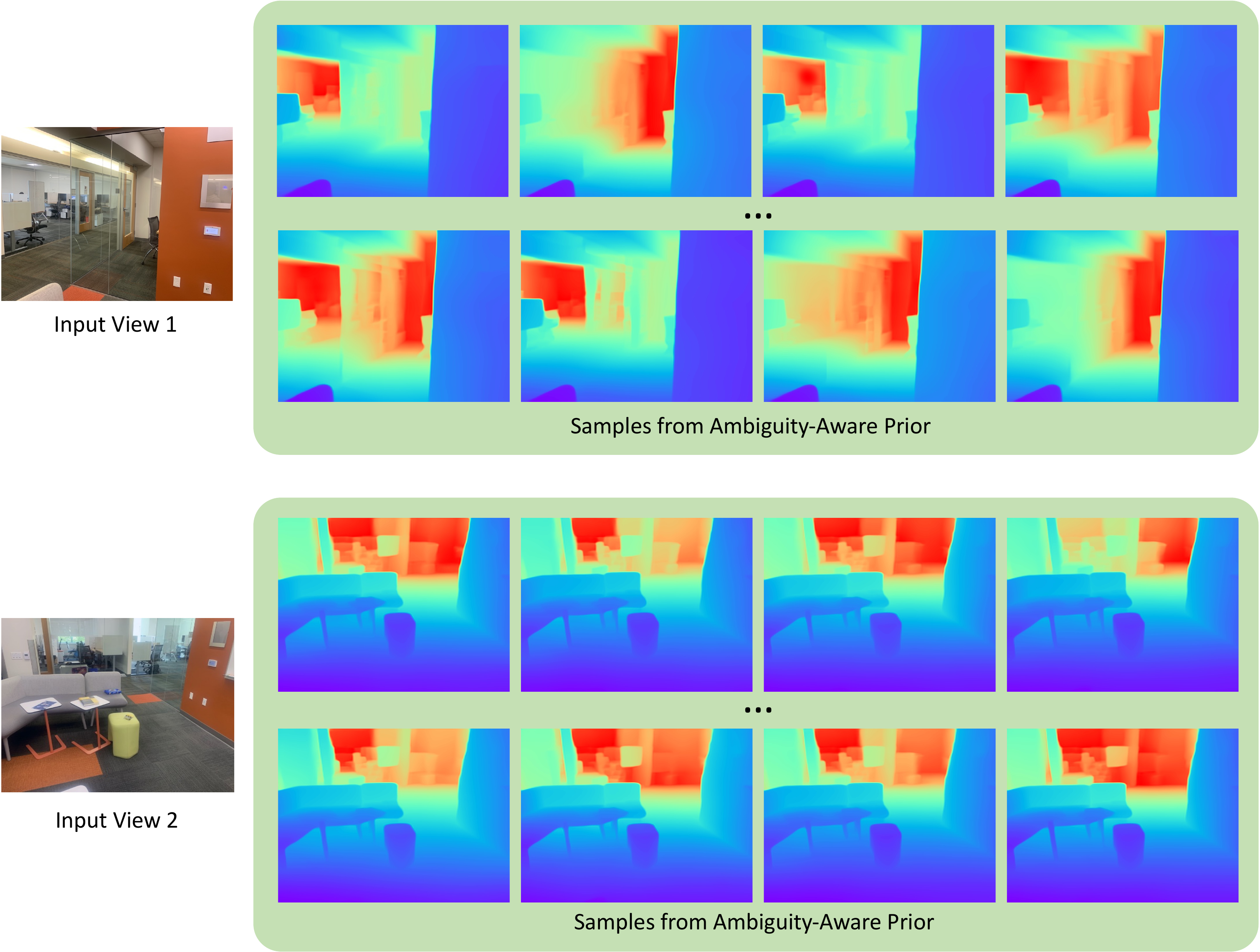}
    \captionof{figure}{\textbf{Ambiguity-Aware Depth Estimates}. Hypothesis from two input views with non-opaque surfaces. This figure shows that in both cases, our ambiguity-aware prior is able to recover a distribution of depth estimates that is multimodal. These multimodal distributions allow to capture the room as well as the recover the objects behind the glass. Given the multiple views with multimodal distributions, NeRF is able to \textbf{find the mode} that is consistent, hence allowing for less blurry and better photometric reconstruction. Please view the attached video demo for the results on this In-the-Wild scene.
    }`
    \label{fig:teaser_supp}
\end{figure*}

\begin{figure*}[t]
    \centering
    \includegraphics[width=0.91\linewidth]{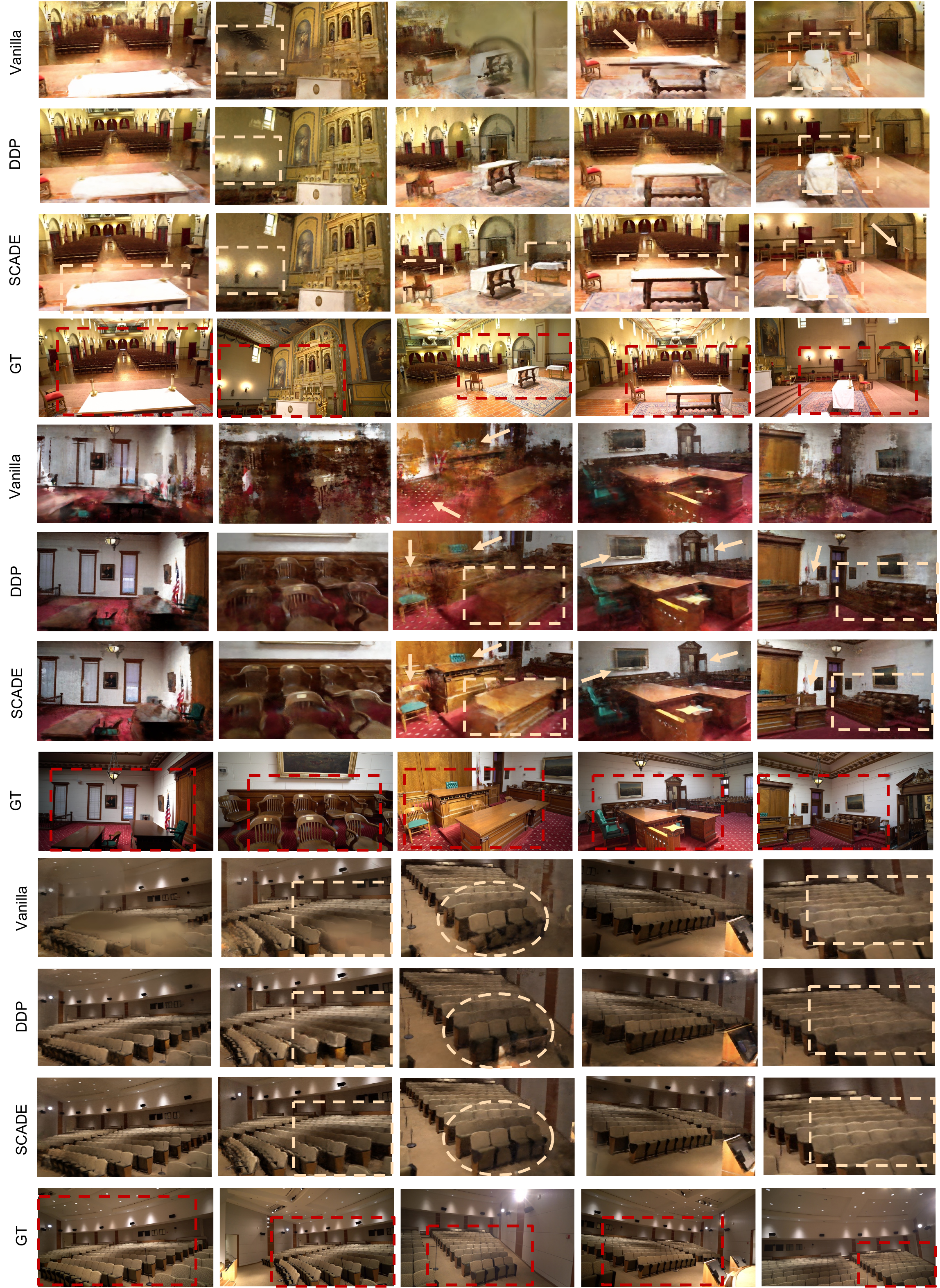}
    \caption{Qualitative Results for the Tanks and Temples~\cite{Knapitsch2017} dataset.}
    \label{fig:tanks_and_temples_quali}
\end{figure*}

\begin{figure*}[t]
    \centering
    \includegraphics[width=\linewidth]{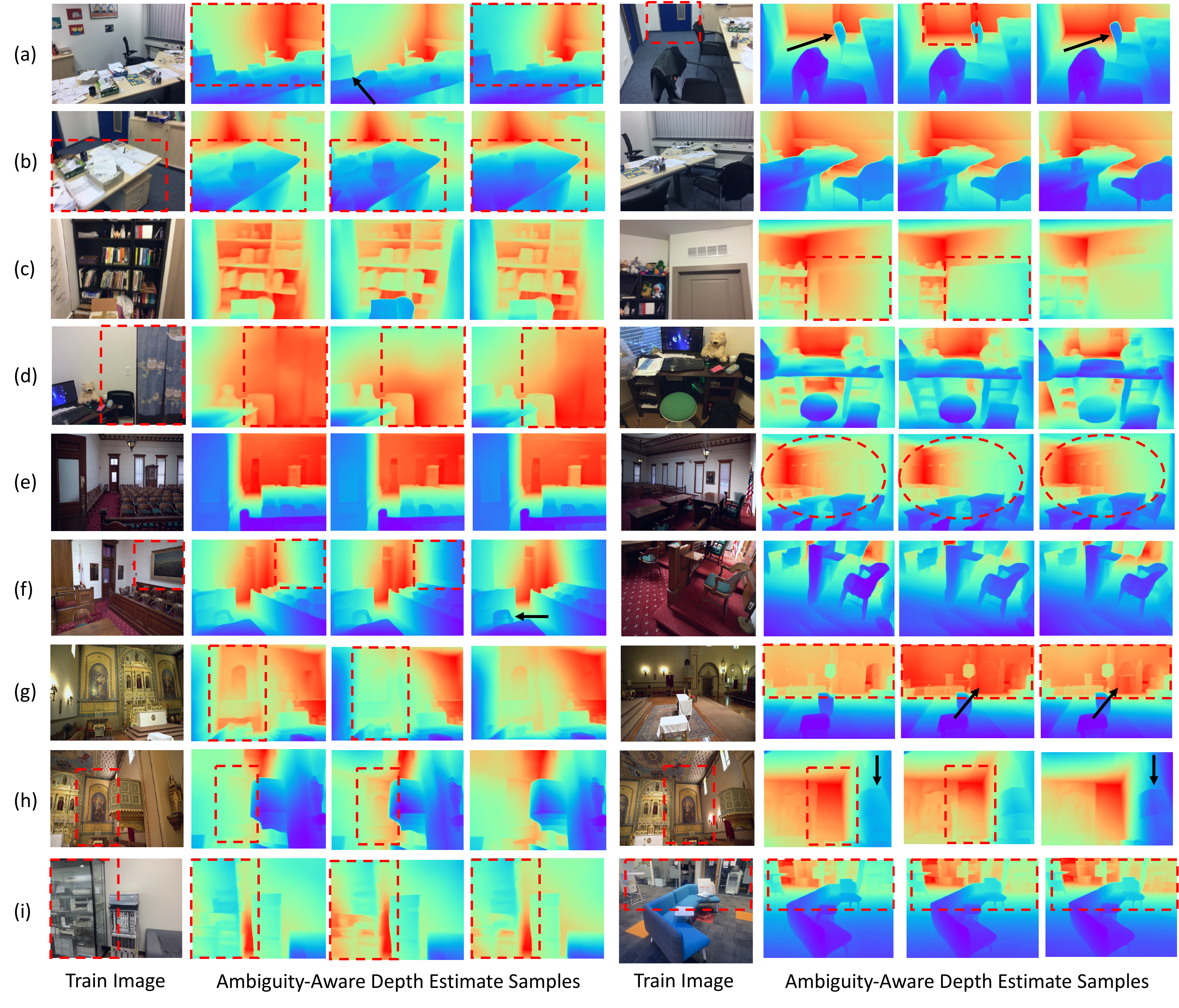}
    \caption{\textbf{Samples from our Ambiguity-Aware Depth Estimates on train images of the different scenes used in our experiments}. Ambiguity is shown in [Left; right]: (a) How far the back wall is relative to the chair as well as the width of the cabinet and how far it is relative to the desk; whether the door is at a different compared to the wall and the relative depth of the the second chair w.r.t. to the nearer chair and the wall. (b) Objects on the desk have varying depths, e.g. it is unclear from a single view whether the papers have a thickness or not; relative depth of the chair w.r.t. the wall and the camera (c) Depth of the bookshelf; albedo v.s. shading of the door w.r.t to the door frame. (d) Depth of the curtain, whether it is flat on the wall or not, and without scene context, it can also be interpreted as painted texture on the wall; relative depths of the different cluttered objects. (e) Relative depths of the barrier, the seats and the far back wall with a cabinet; depth of the far back corner of the room w.r.t. the desk and chair and the camera. (f) Whether the painting is flat on the wall or the frame protrudes it out; relative depths of the chairs and the far back wall. (g) Whether the painted texture is convex or is flat (i.e. just painted) on the wall; whether there is a far back door or is just a texture on the wall. (h) Both are similar to g. left but on different viewpoints and on the opposite side of the room. (i) Non-opaque surface ambiguity due to the glass cabinet; glass door behind the sofa is also non-opaque. 
    }
    \label{fig:depth_hypothesis_many}
\end{figure*}

\begin{figure*}[t]
    \centering
    \includegraphics[width=0.9\linewidth]{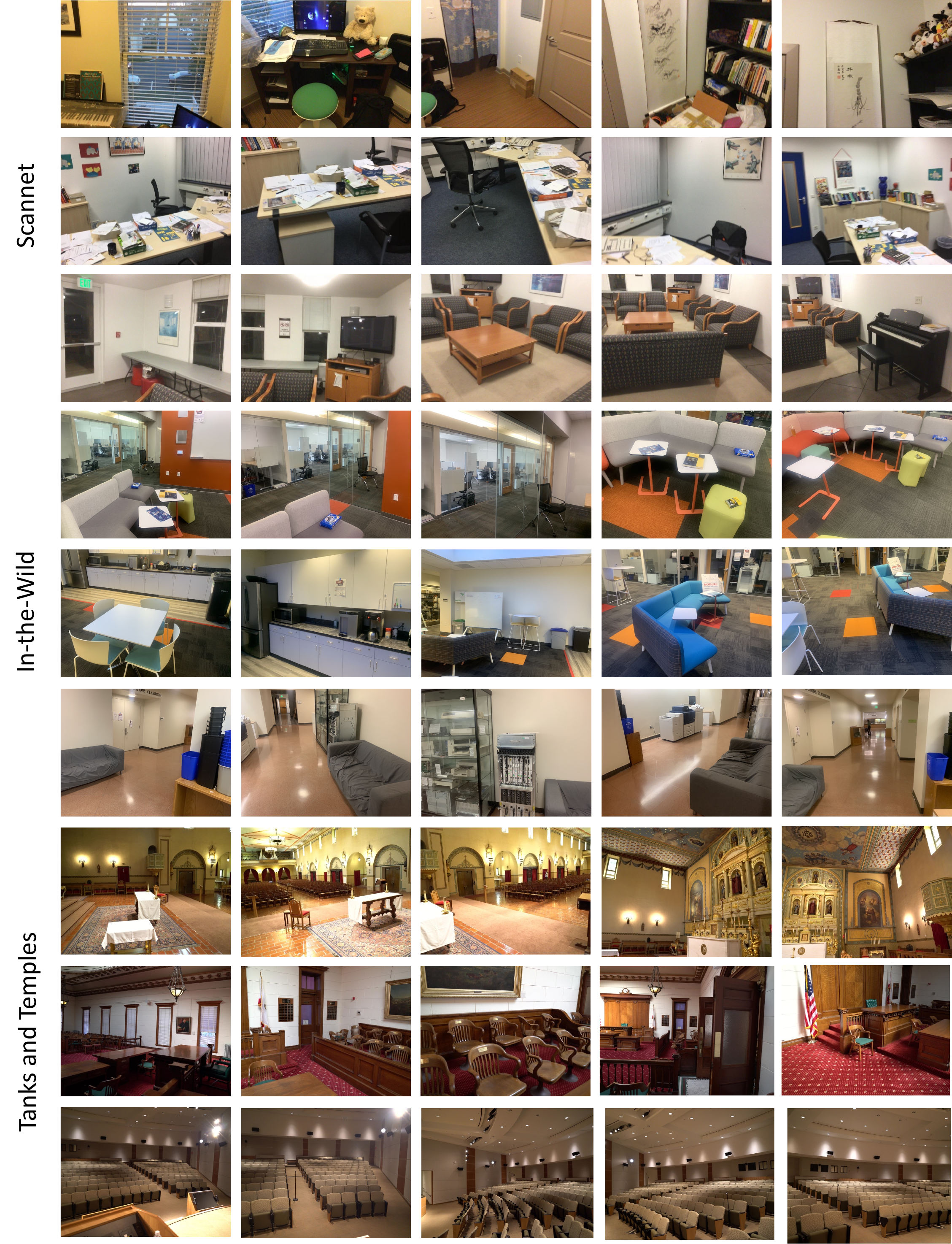}
    \caption{Samples of training images from the three scenes from the three datasets - Scannet~\cite{dai2017scannet}, In-the-Wild and Tanks and Temples~\cite{Knapitsch2017}.}
    \label{fig:train_images}
\end{figure*}

\end{document}